%% file: main.tex
\documentclass[twoside,11pt]{article}

\usepackage{blindtext}
\usepackage{graphicx}
\usepackage{float}
\usepackage{caption}   
\usepackage{subcaption} 
\usepackage{bm}
\usepackage{enumitem}
\usepackage{adjustbox} 
\usepackage{mathtools}
\usepackage{amsmath}

\usepackage{algorithm}
\usepackage{algorithmic}
\usepackage[dvipsnames,table,svgnames]{xcolor}
\definecolor{deepblue}{RGB}{0,56,140} 
\definecolor{deepred}{RGB}{178,34,34}

\usepackage{tcolorbox}
\usepackage{xcolor}

\usepackage{wrapfig}
\tcbuselibrary{skins}        
\newcommand{\leadcomment}[1]{%
  \textbf{\textit{\textcolor{deepred}{#1}}}
}

\usepackage{multirow}
\usepackage{tabularx}
\usepackage{subcaption}

\usepackage[utf8]{inputenc} 
\usepackage[T1]{fontenc}    
\usepackage{url}            
\usepackage{booktabs}       
\usepackage{amsfonts}       
\usepackage{nicefrac}       
\usepackage{microtype}      
\usepackage{threeparttable}

\usepackage{siunitx}
\usepackage{tikz}
\usepackage{pifont} 

\definecolor{mydarkblue}{rgb}{0,0.08,0.45}
\definecolor{babyblue}{RGB}{137, 207, 240}
\definecolor{lightblue}{RGB}{173, 216, 230}
\definecolor{mydarkgreen}{RGB}{0, 139, 69}
\definecolor{MAEblue}{HTML}{C3DDEF}
\usepackage{subcaption}

\usepackage{titletoc}

\usepackage[abbrvbib, preprint]{jmlr2e}

\usepackage[capitalize,noabbrev]{cleveref}
\hypersetup{
	colorlinks=true,
	urlcolor=magenta,
	citecolor=mydarkblue,
}

\usepackage{lastpage}

\ShortHeadings{Transformers Are Born Biased}{}
\firstpageno{1}

\usepackage{amsmath}
\begin{document}

\title{
Transformers Are Born Biased: Structural Inductive Biases at Random Initialization and Their Practical Consequences
}

\author{\name Siquan Li\thanks{Equal contribution.} \email lisiquan@cuhk.edu.cn\\
      \addr The Chinese University of Hong Kong, Shenzhen
      \AND   
      \name Yao Tong$^{\textcolor{red}{*}}$ \email tongyao@u.nus.edu \\
      \addr National University of Singapore
      \AND
      \name Haonan Wang$^{\textcolor{red}{*}}$ \email haonan.wang@u.nus.edu\\
      \addr National University of Singapore
      \AND
      \name Tianyang Hu\thanks{Correspondence to Tianyang Hu.} \email hutianyang@cuhk.edu.cn \\
      \addr The Chinese University of Hong Kong, Shenzhen
}

\editor{My editor}

\maketitle

\begin{abstract}
 Transformers underpin modern large language models (LLMs) and are commonly assumed to be behaviorally unstructured at random initialization, with all meaningful preferences emerging only through large-scale training. 
 We challenge this assumption by showing that randomly initialized transformers already exhibit strong and systematic structural biases. 
 In particular, untrained models display extreme token preferences: across random input sequences, certain tokens are predicted with probabilities orders of magnitude larger.

 We provide a mechanistic explanation for this phenomenon by dissecting the transformer architecture at initialization. We show that extreme token preference arises from a contraction of token representations along a random seed-dependent direction. 
 This contraction is driven by two interacting forces: (i) asymmetric nonlinear activations in MLP sublayers induce global (inter-sequence) representation concentration, and (ii) self-attention further amplifies this effect through local (intra-sequence) aggregation. Together, these mechanisms align hidden representations along a direction determined solely by the random initialization, producing highly non-uniform next-token predictions.

 Beyond mechanistic insight, we demonstrate that these initialization-induced biases persist throughout training, forming a stable and intrinsic model identity. 
 Leveraging this property, we introduce \textit{SeedPrint}, a fingerprinting method that can reliably distinguish models that differ only in their random initialization, even after extensive training and under substantial distribution shift. 
 Finally, we identify a fundamental \textit{positional discrepancy} inherent to the attention mechanism's intra-sequence contraction that is causally linked to the \textit{attention-sink} phenomenon.
 This discovery provides a principled explanation for the emergence of sinks and offers a pathway for their control.

\end{abstract}

\begin{keywords}
    Transformer, Inductive Bias, LLM Fingerprint, Attention Sink
\end{keywords}

\section{Introduction}

Large Language Models (LLMs) have become the cornerstone of contemporary AI, driving advances in language understanding, code generation, reasoning, and beyond~\citep{brown2020language, achiam2023gpt, wei2022chain, touvron2023llama}. With access to astronomical amounts of data, it is often assumed that a model's capabilities are entirely shaped by training—that the vast pre-training and post-training processes alone define its personality, preference, and knowledge~\citep{kaplan2020scaling, hoffmann2022training, ouyang2022training}. Under this prevailing view, a model at random initialization is merely a blank slate, a white sheet of parameters awaiting instruction from data.

However, we reveal a surprising phenomenon: a randomly initialized transformer is not featureless. Even before seeing any data, it exhibits systematic biases in token preferences. 
As shown in Figure~\ref{fig:sub_left}, when performing next-token prediction on random input sequences, certain tokens are preferred by magnitudes larger than others. This counterintuitive observation challenges the blank-slate assumption and motivates a closer examination of the model's initialization regime.

Despite the celebrated success of transformers, the theoretical understanding of these architectures has lagged behind their empirical progress, and modern LLMs still function largely as black boxes. 
Existing studies primarily focus on scaling laws, optimization dynamics, or emergent behaviors \emph{after} training~\citep{kaplan2020scaling, liu2020understanding, noci2022signal, wei2022emergent}, while the initialization regime—where no learning has yet occurred—remains largely unexplored.

\begin{figure}[t]
    \centering
    \begin{subfigure}{0.48\textwidth}
        \centering
        \includegraphics[width=\textwidth]{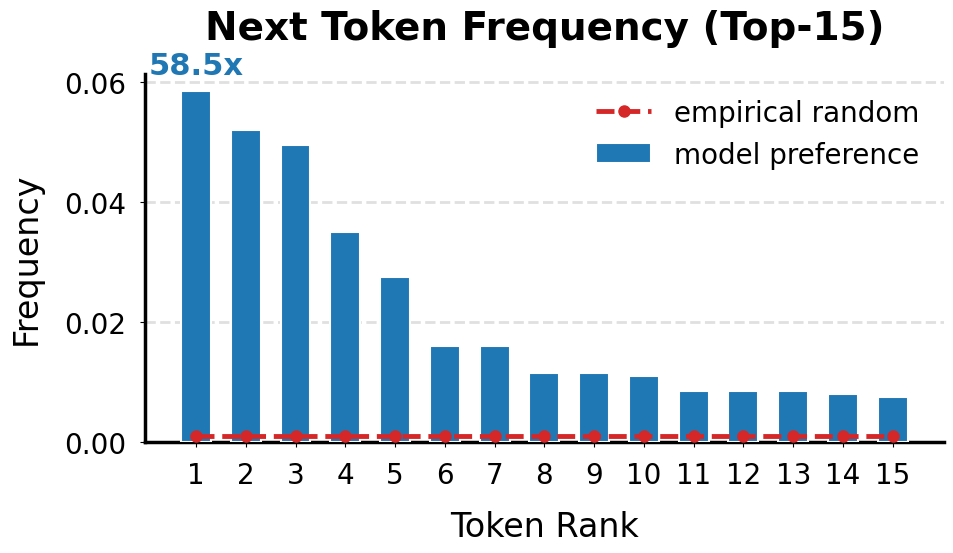}
        \caption{\ }
        \label{fig:sub_left} 
    \end{subfigure}
    \hfill
    \begin{subfigure}{0.48\textwidth}
        \centering
        \includegraphics[width=\textwidth]{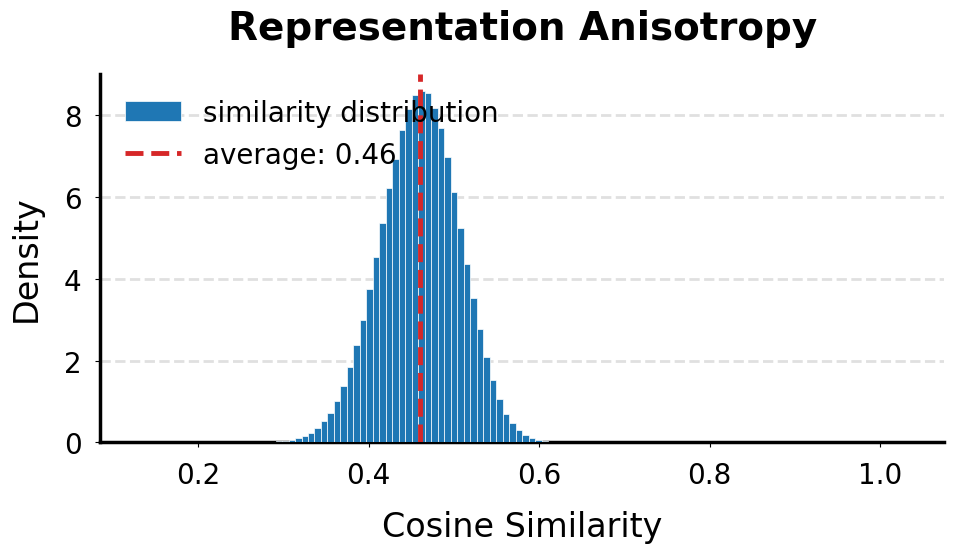}
        \caption{\ }
        \label{fig:sub_right} 
    \end{subfigure}
     \vspace{-6pt}
    \caption{\textbf{Initialized models are not blank states.}
    {(a)} When conducting next-token prediction on random sequences, randomly initialized transformer exhibits extreme biases where certain tokens are preferred by magnitudes larger than others. For reference, the red dashed line indicates the empirical top-ranked frequencies observed under uniform random sampling.
    {(b)} The token representation from random transformers are severely contracted towards a common direction, as indicated by the pairwise cosine similarity of the last-token representation among sequences.}
    \label{fig:main_figure} 
     \vspace{-9pt}
\end{figure}

To address this gap, we use random token sequences as probes to systematically study the initialization state of transformers. 
First, we show that the observed extreme token preferences are tied to random initialization: different random seeds give rise to distinct preference patterns.
Then, we analyze the internal dynamics from input to output and attribute the extreme next-token biases (Figure~\ref{fig:sub_right}) to an inherent contraction of token representations. 
Tracing this contraction through the transformer architecture, we identify two complementary forces that jointly drive the effect: 
\begin{itemize}
	\item 
    \textbf{Inter-sequence Concentration from MLP sublayers (\cref{sec:MLP_contract})}: 
    The asymmetric nonlinear activation (e.g., GELU) in MLP sublayers contracts representations of random input tokens toward a shared direction among different sequences.

	\item 
    \textbf{Intra-sequence Concentration from self-attention sublayers (\cref{sec:attn_contract,sec:MLP_attn_interplay})}
    The aggregation of value vectors in self-attention contracts token representations within the same sequence toward a shared direction, which in turn amplifies the existing MLP-induced inter-sequence contraction.
\end{itemize}

These results are conceptually striking: they demonstrate that the transformer architecture itself, even prior to training, possesses structural biases tied to random initialization.
In Section~\ref{sec:persistence}, we further reveal that these initial biases persists after training, forming a hidden and enduring model identity.

Finally, beyond mechanistic understanding, our findings also have practical implications. 
\begin{itemize}
    \item 
    \textbf{Biometric-prototype LLM fingerprinting} (\cref{sec:fingerprint}): Since the observed bias is \textit{model-distinct}, \textit{seed-specific}, and \textit{persistent through training}, it thus offers a natural foundation for birth-to-life model identification~\citep{galton1892finger} that does not rely on training artifacts. Utilizing the initial token biases, we propose a novel fingerprinting method that can even differentiate two LLMs trained with identical training pipelines but initialized with different random seeds.
    \item 
    \textbf{Attention-sink Mitigation} (\cref{sec:variance_and_sinks}): 
    We identify a fundamental but previously overlooked \textit{positional discrepancy} inherent to the attention mechanism's intra-sequence contraction. We demonstrate that this discrepancy is \textit{causally} linked to the widely observed attention-sink phenomenon~\citep{xiao2023efficient} in pretrained LLMs. 
    Uncovering this statistical origin provides a principled way to modulate sink strength through simple architectural adjustments. 
\end{itemize}

The rest of this paper is structured as follows. In Section~\ref{sec:preliminary}, we provide notation and formally introduce the structure of transformers. In Section~\ref{sec:ntp_preference}, we introduce the next-token extreme preference phenomenon in details. 
A mechanistic understanding of this phenomenon is provided in Section \ref{sec:rep_contract}, followed by empirical evidence in Section \ref{sec:persistence} demonstrating that such extreme token preference and the representation contraction persist during training.
In Section \ref{sec:practical}, we exploit this new-found understanding for practical benefits for practitioners and discuss potential impact in Section \ref{sec:discussion}.

\section{Preliminary}
\label{sec:preliminary}

\paragraph{Notations}
Bold uppercase letters $\bm{X}$ denote matrices, bold lowercase letters $\bm{x}$ denote vectors, and italic letters $x$ denote scalars. $\|\bm x\|$ denotes the $l_2$ norm of the vector $\bm x$. 
Unless otherwise specified, all vectors are column vectors, and all nonlinearities (e.g., GELU~\citep{Hendrycks2016}, SiLU~\citep{ramachandran2017searching}) are applied element-wise.
We use $\odot$ to denote element-wise (Hadamard) multiplication and $\oplus$ to denote residual addition.
$\bm{I_d}$ denotes $d$-dimensional identity matrix.

\subsection{Decoder-only Transformer Architecture}
The standard decoder-only transformer architecture~\citep{vaswani2017attention} has become the dominant backbone for modern LLMs~\citep{touvron2023llama, team2024qwen2, openai2025gpt5}.
Given a sequence of input token indices $\bm{i} = [i_1, \dots, i_T]^\top$ from a vocabulary $\mathcal{V}$, the transformer first maps these indices into a continuous vector space via an embedding layer 
$\bm{X}^{(0)} = \mathrm{Embed}(\bm{i}) \in \mathbb{R}^{T \times d},$ 
where $T$ denotes the sequence length and $d$ represents the hidden dimension.  
A standard transformer block then processes these representations through a sequence of interleaved components, typically consisting of self-attention mechanisms, position-wise Multi-Layer Perceptron (MLP) networks, and layer normalization modules.

\paragraph{self-attention}
For a single self-attention head, given hidden states $\bm{X} \in \mathbb{R}^{T \times d}$, the query, key and value matrices are computed as
\begin{align*}
\bm{Q} = \bm{X}\bm{W}_Q,\quad
\bm{K} = \bm{X}\bm{W}_K,\quad
\bm{V} = \bm{X}\bm{W}_V,
\end{align*}
where $\bm{W}_Q,\bm{W}_K,\bm{W}_V \in \mathbb{R}^{d \times d_k}$ are learnable projection matrices and $d_k$ denotes the head-dimension.
The attention mechanism computes a weighted sum of values based on the compatibility between queries and keys:
\begin{align*}
\mathrm{Attn}(\bm{X})
= \mathrm{Softmax}\!\left(\frac{\bm{Q}\bm{K}^\top}{\sqrt{d_k}}\right)\bm{V}.
\end{align*}
Multi-head attention (MHA) is constructed by executing $h$ such heads in parallel, concatenating their outputs, and applying a final linear projection $\bm{W}_O\in\mathbb{R}^{hd_k\times d}$.

\paragraph{Multi-Layer Perceptron} The MLP sublayer is applied to each token representation independently across the sequence.
Following the architecture popularized by \cite{radford2019language}, we consider a two-layer architecture with a non-linear activation $\phi$:
\begin{align*}
\mathrm{MLP}(\bm{X}) = \phi(\bm{X}\bm{W}_{\text{up}})\bm{W}_{\text{down}},
\end{align*}
where $\bm{W}_{\text{up}} \in \mathbb{R}^{d \times d_\text{MLP}}$ and
$\bm{W}_{\text{down}} \in \mathbb{R}^{d_\text{MLP} \times d}$ are learnable weight matrices,
and $d_\text{MLP}>d$ is the intermediate MLP hidden dimension. 
Popular choices for $\phi$ include GELU and SwiGLU. 
This sublayer expands the model capacity by projecting the hidden states into a higher-dimensional space before mapping them back to the original model dimension $d$.

\paragraph{Normalization}
Normalization is essential for stabilizing the training of both MHA and MLP sublayers.
The original transformer architecture utilized LayerNorm~\citep{ba2016layer}, which normalizes each token representation $\bm{x} \in \mathbb{R}^d$ by subtracting its mean $\bm{\mu}$ and dividing by its standard deviation $\sigma$, followed by a learned affine transformation:
\begin{align*}
\mathrm{LayerNorm}(\bm{x})
= \bm{\gamma} \odot \frac{\bm{x} - \bm{\mu}}{\sqrt{\sigma^2 + \epsilon}} + \bm{\beta},
\end{align*}
where $\bm{\gamma}, \bm{\beta} \in \mathbb{R}^d$ are learnable scale and shift parameters, and $\epsilon$ is a small constant for numerical stability.
Modern LLMs such as LLaMA-4~\citep{llama4_blog} and Qwen3~\citep{yang2025qwen3} typically adopt Root Mean Square Layer Normalization (RMSNorm)~\citep{zhang2019root}, which omits mean-centering (and often the bias term) and only rescales by the root mean square of the activations:
\begin{align*}
\mathrm{RMSNorm}(\bm{x})
= \bm{\gamma} \odot \frac{\bm{x}}{\sqrt{\frac{1}{d}\sum_{i=1}^d x_i^2 + \epsilon}}.
\end{align*}
Unless otherwise specified, Norm$(\cdot)$ in our equations denotes LayerNorm.

\paragraph{Block Structure and Output Layer}
Each transformer block comprises an MHA sublayer followed by a position-wise MLP sublayer. We employ the pre-norm configuration with residual connections~\citep{he2016deep}:
\begin{align*}
\bm{H}^{(l)} &= \bm{X}^{(l-1)} \oplus \mathrm{MHA}\big(\mathrm{Norm}(\bm{X}^{(l-1)})\big), \\
\bm{X}^{(l)} &= \bm{H}^{(l)} \oplus \mathrm{MLP}\big(\mathrm{Norm}(\bm{H}^{(l)})\big),
\end{align*}
for layers $l = 1,\dots,L$.
After $L$ decoder blocks, the final hidden states $\bm{X}^{(L)} \in \mathbb{R}^{T \times d}$ undergo a final normalization and an output projection (the language-model head) to obtain logits
$\bm{L} \in \mathbb{R}^{T \times |\mathcal{V}|}$ over the vocabulary:
\begin{align}
\label{eq:next_token_logits}
\bm{L} = \mathrm{Norm}_{\text{final}}(\bm{X}^{(L)}) \bm{W}_U,
\end{align}
where $\bm{W}_U \in \mathbb{R}^{d \times |\mathcal{V}|}$ is the unembedding matrix.
We follow the common practice of weight tying~\citep{press2016using} and set $\bm{W}_U = \bm{W}_E^\top$.

\paragraph{Autoregressive Language Modeling}

The transformer backbone described above parameterizes an autoregressive language model.
Given a sequence of tokens $x_1, x_2, \dots, x_T$, the model assumes the usual factorization
\begin{align*}
P(x_1, x_2, \dots, x_T)
= \prod_{t=1}^{T} P(x_t \mid x_1, \dots, x_{t-1}),
\end{align*}
where each conditional distribution $P(x_t \mid x_{<t})$ is obtained by applying a softmax to the logits at position $t$ in Equation~\eqref{eq:next_token_logits}.
In the remainder of the paper, we study how the architectural choices above, together with standard initialization, induce systematic preferences over next-token probabilities even before training.

\subsection{Current LLM Research Frontline}

Modern LLM research predominantly focuses on the capabilities and behaviors that emerge \textit{after} substantial training.
This includes scaling laws and pretraining efficiency~\citep{hoffmann2022training, sun2024massive,kaplan2020scaling}, post-training alignment and reasoning~\citep{lin2022truthfulqa, wang2022self, wang2025prefix, wang2025understanding, ren2024learning, guo2025deepseek}, and model interpretability~\citep{elhage2021mathematical, allen2023physics, dai2022knowledge, chen2024exact, rai2024practical}. Under this prevailing paradigm, a model at random initialization is often viewed as a ``blank slate''—a featureless collection of parameters awaiting instruction from data~\citep{bommasani2021opportunities, mueller2022coloring}.

In contrast, we propose a shift in perspective from ``what the model learns'' to ``what the model is born with''. Our work investigates the fundamental structural inductive biases present at the very beginning: the initialization regime. 
We argue that the transformer architecture is not a neutral container for data, but rather a structured system with innate computational tendencies. 
This fundamental perspective contributes to the community by demonstrating that initialization-born structures are the mechanistic root of certain model behaviors.

Specifically, our discovery provides a physical basis for SeedPrint, a LLM fingerprinting method that enables reliable ``birth-to-life'' identification by leveraging seed-specific biases that persist throughout the model's lifecycle. 
Furthermore, we demystify the widespread attention-sink phenomenon, revealing it to be a statistical byproduct of the architecture's initial variance discrepancy rather than a learned strategy. By isolating these effects, we offer a direct pathway for controlling such behaviors through principled architectural adjustments rather than intensive data-driven tuning.

\indent In the following sections, we move beyond these high-level observations to provide a rigorous mechanistic account of these biases, beginning with the empirical characterization of the extreme token preference phenomenon.

\section{Extreme Token Preference at Initialization}\label{sec:ntp_preference}

We use random sequences as a controlled probe to investigate transformers at initialization. 
Studying how a randomly initialized transformer processes random input offers a principled way to reveal the innate computational biases embedded in its architecture and initialization.
Specifically, we conduct our exploratory experiments on randomly initialized nano-scale LLMs, including RoPE-enhanced GPT-2~\citep{radford2019language} and LLaMA-2~\citep{touvron2023llama}. 
To ensure architectural comparability, both models adopt a consistent GPT-style parameter initialization\footnote{The specific details of this initialization are provided in Appendix~\ref{sec:appendix_init_details}.}. 
Specifically, we generate $N=2,000$ input sequences, where each token is sampled uniformly and independently from the vocabulary $\mathcal{V}$ ($|\mathcal{V}| \approx 50,000$). 
For each sequence, we perform a forward pass through the randomly initialized model and record the token predicted at the final position (i.e., the token with the maximum logit).
Below we present our findings. 

\begin{tcolorbox}[
    enhanced,
    colback=LightGoldenrodYellow!40,
    colframe=black!20,
    arc=2mm, 
    boxrule=0.5pt, 
    boxsep=0mm,
    before skip=5mm,
    after skip=5mm
]
\textbf{Finding 1:} \textit{Across different random sequences, randomly initialized transformers have abnormally large tendency to predict the same token. }
\end{tcolorbox}

Figure~\ref{fig:main_figure}(a) illustrates the frequency distribution of the top-ranked predicted tokens derived from this process.
If the initialized model were truly a blank slate or featureless, the predicted tokens should be distributed roughly uniformly across the vocabulary.
This baseline is depicted by the red dashed line, which represents the empirical top-ranked frequencies observed when sampling $N$ tokens completely at random from the vocabulary $\mathcal{V}$.
However, the actual model behavior (blue bars) deviates drastically from this baseline. 
Instead of a uniform spread, a tiny subset of tokens dominates the predictions. The most preferred token appears 58.5 times more frequent than the random baseline.

To demonstrate the robustness of this random token prediction bias, we evaluate models across three distinct architectures: (a) \textit{RoPE-enhanced Nano GPT-2}, (b) \textit{Nano LLaMA-2}, and (c) larger \textit{RoPE-enhanced 1.2B GPT-2}. 
For each model configuration, we conduct independent trials varying the input sequence length and random initialization seed. 
Each trial involves generating $N=10,000$ random input sequences\footnote{To ensure the comparability of token identifiers across different random seeds within the same architecture, we maintain \textit{fixed weights} for the embedding and language modeling head layers, isolating the source of variation to the internal transformer blocks.}. 
For quantitative assessment, we analyze the statistical significance of the observed token preference by calculating the $p$-values under the null hypothesis of a uniform distribution\footnote{We applied \textit{Bonferroni correction} to strictly account for multiple hypothesis testing across the vocabulary size. Specific details of experimental setup are provided in Appendix~\ref{sec:appendix_consistent_details}.}. 
The results are summarized in Table~\ref{tab:integrated_bias_stats}, which explicitly reports the {most frequent token (top-1 ID)}, its frequency, and statistical significance. We observe three prominent characteristics regarding the phenomenon:

\begin{tcolorbox}[
    enhanced,
    colback=LightGoldenrodYellow!40,
    colframe=black!60,
    arc=2mm, 
    boxrule=0.5pt, 
    boxsep=0mm,
    before skip=5mm,
    after skip=5mm
]
\textbf{Finding 1.1:} \textit{The extreme token preference remains highly consistent and robust across diverse model architectures, scales, and input lengths. }
\end{tcolorbox} 

Regardless of whether a RoPE-enhanced GPT-2 or a LLaMA-2 architecture is used, the model consistently exhibits a non-random preference for specific tokens, confirming that the bias is a structural property rather than a stochastic artifact.
  
\begin{tcolorbox}[
    enhanced,
    colback=LightGoldenrodYellow!40,
    colframe=black!60,
    arc=2mm, 
    boxrule=0.5pt, 
    boxsep=0mm,
    before skip=5mm,
    after skip=5mm
]
\textbf{Finding 1.2:} \textit{The larger the model, the longer the sequence, the more extreme the token preference. }
\end{tcolorbox} 

The intensity of the bias scales positively with both model scale and input sequence length. For instance, in the 1.2B GPT-2 model (part (c)) under seed 43, the top-1 frequency increases from $5.20\%$ at a sequence length of 64 to a striking $10.20\%$ at a length of 1024. Comparing part (a) with part (c), the 1.2B model consistently exhibits higher bias intensities than its nano counterpart at the same input length, suggesting that larger scales and longer sequences further intensify the next-token prediction bias.

\begin{tcolorbox}[
    enhanced,
    colback=LightGoldenrodYellow!40,
    colframe=black!60,
    arc=2mm, 
    boxrule=0.5pt, 
    boxsep=0mm,
    before skip=5mm,
    after skip=5mm
]
\textbf{Finding 1.3:} \textit{The extreme token preference is tied to the random initialization of transformers. }
\end{tcolorbox}

Even for models with the identical architecture and shared embedding weights, varying the initialization seeds consistently leads to a convergence on different top-1 IDs. For instance, in the 1.2B GPT-2 model, while seed 42 consistently favors token 23062 across all sequence lengths, seed 43 shifts this preference to token 20694. This evidence confirms that while the phenomenon of prediction bias is a universal structural property, its specific target is uniquely determined by the random initial weights.

\begin{table}[t]
\centering
\begin{threeparttable}
\caption{Token prediction bias statistics across different models. We report the identity of the most predicted token (top-1 ID), its frequency in percentage (top-1 Freq. \%), and its statistical significance ($p$-value). The results are grouped by model architecture: (a) RoPE-enhanced Nano GPT-2, (b) Nano LLaMA-2, and (c) RoPE-enhanced 1.2B GPT-2.}
\label{tab:integrated_bias_stats}
\small
\renewcommand{\arraystretch}{0.88} 
\setlength{\extrarowheight}{0pt}   
\setlength{\tabcolsep}{20pt}
\begin{tabular}{
    S[table-format=2.0]
    S[table-format=4.0]
    S[table-format=5.0]
    S[table-format=2.2, table-space-text-post=\%]
    S[table-format=1.2e-3, table-space-text-post=\tnote{$\dagger$}]
}
\toprule
\multicolumn{1}{c}{Seed} & 
\multicolumn{1}{c}{SeqLen} & 
\multicolumn{1}{c}{Top-1 ID} & 
\multicolumn{1}{c}{Top-1 Freq. (\%)} & 
\multicolumn{1}{c}{$p$-value} \\

\midrule
\multicolumn{5}{l}{\textbf{(a) RoPE-enhanced Nano GPT-2} (12L, 12H, 768d)} \\
\midrule
42 & 64   & 6336  & 2.60\% & 4.55e-137 \\
42 & 256  & 30425 & 4.85\% & 6.59e-285 \\
42 & 1024 & 666   & 5.20\% & 6.75e-309 \\
\addlinespace
43 & 64   & 7328  & 3.00\% & 2.20e-162 \\
43 & 256  & 7328  & 5.25\% & 2.42e-312 \\ 
43 & 1024 & 7328  & 6.75\% & 0.00\tnote{$\dagger$} \\ 

\midrule
\multicolumn{5}{l}{\textbf{(b) Nano LLaMA-2} (12L, 12H, 768d)} \\
\midrule
42 & 64   & 18219 & 0.06\% & 3.16e-2 \\
42 & 256  & 19647 & 0.06\% & 3.16e-2 \\
42 & 1024 & 11048 & 0.07\% & 1.40e-3 \\
\addlinespace
43 & 64   & 6479  & 0.09\% & 1.90e-6 \\
43 & 256  & 17743 & 0.06\% & 3.16e-2 \\
43 & 1024 & 21436 & 0.07\% & 1.40e-3 \\

\midrule
\multicolumn{5}{l}{\textbf{(c) RoPE-enhanced 1.2B GPT-2} (24L, 32H, 2048d)} \\
\midrule
42 & 64   & 23062 & 3.95\% & 2.57e-224 \\
42 & 256  & 23062 & 6.20\% & 0.00\tnote{$\dagger$} \\
42 & 1024 & 23062 & 6.35\% & 0.00\tnote{$\dagger$} \\
\addlinespace
43 & 64   & 20694 & 5.20\% & 6.75e-309 \\
43 & 256  & 20694 & 7.45\% & 0.00\tnote{$\dagger$} \\
43 & 1024 & 20694 & 10.20\% & 0.00\tnote{$\dagger$} \\
\bottomrule
\end{tabular}

\begin{tablenotes}
    \footnotesize
    \item[$\dagger$] The $p$-value is reported as 0.00 due to numerical underflow in floating-point precision.
\end{tablenotes}
\end{threeparttable}
\end{table}

\section{Representation Contraction at Initialization}\label{sec:rep_contract}

In this section, we elucidate the underlying mechanism leading to the observed extreme token preference. 
Tracing back the next-token prediction process, the logits are determined by the product of the last token representation and the unembedding matrix as in Equation \eqref{eq:next_token_logits}.
Since the unembedding matrix (row-wise) is randomly initialized with independent and identically distributed Gaussian vectors that are approximately orthogonal and of similar norms, a reasonable suspect responsible for the observed extreme preference is the \textit{last token representations}.

To uncover the abnormalities in the last token representations, we conduct an empirical study using a RoPE-enhanced Nano GPT-2 model. Specifically, we feed the model with 2,000 sequences, each consisting of 1,024 randomly sampled tokens. 
For each input sequence, we extract the representations of the last token from the output of every transformer block. 
Then, layer by layer, we compute the \textit{pairwise cosine similarities} among the last-token representations from different sequences. 
This metric measures the representational similarity of the last token across sequences.
The results are illustrated in Figure~\ref{fig:pairwise_cosine_similarity}, which characterizes the evolution of the last token representations through the network layers. 
The line chart in Figure~\ref{fig:pairwise_cosine_similarity} reveals that the last-token representations of different random input sequences exhibit substantial cosine similarity, indicating a clear directional contraction in the representation space across sequences. We term this phenomenon, in which the last-token representations of different sequences become increasingly similar, \textbf{\textit{inter-sequence contraction}}.
As the model deepens, this contraction becomes more pronounced, as evidenced by the increasing trend in the average pairwise cosine similarity of the last token.

\begin{tcolorbox}[
    enhanced,
    colback=LightGoldenrodYellow!40,
    colframe=black!20,
    arc=2mm, 
    boxrule=0.5pt, 
    boxsep=0mm,
    before skip=5mm,
    after skip=5mm
]
\label{finding:contraction}
\textbf{Finding 2.1:} \textit{The last token representations from a random transformer exhibit severe inter-sequence contraction.}
\end{tcolorbox}

\begin{figure}[t]
    \centering
    \includegraphics[width=0.8\textwidth]{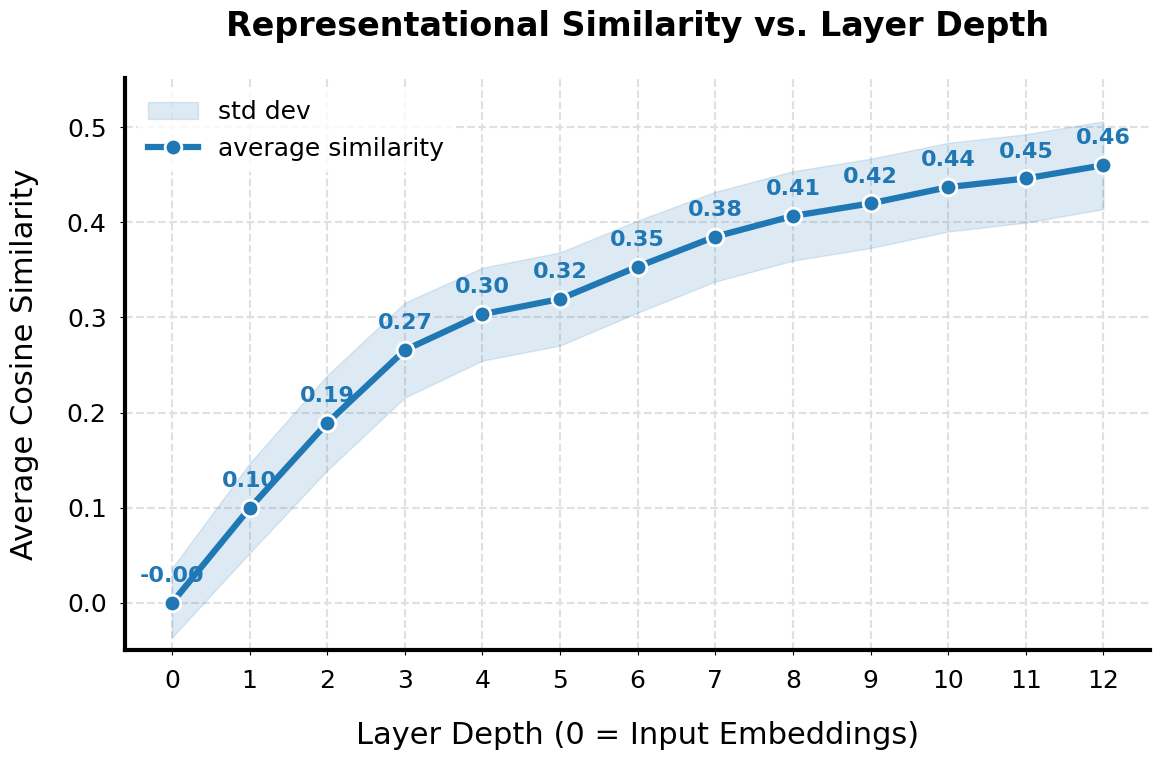}
    \caption{Pairwise cosine similarity of last-token representations between different sequences and its evolution with increasing transformer blocks.}
    \label{fig:pairwise_cosine_similarity}
\end{figure}

To further evaluate the relationship between the contracted direction and the model's token prediction preference, we revisit the logit formulation in Equation \eqref{eq:next_token_logits}. 
First, we define the \textit{representation contraction direction} as the average of the normalized final-layer representations at the last token position (Norm$_{\text{final}}(\bm{X}_{\mathrm{last}}^{(L)})$). 
Then, we treat this direction as the final hidden states and project it via $\bm{W}_U$ to obtain the corresponding logits. 
We are interested in whether the token with the maximum logit (most aligned with the representation contraction direction) is identical to the observed most frequently predicted next token presented in Section \ref{sec:ntp_preference}.

\begin{tcolorbox}[
    enhanced,
    colback=LightGoldenrodYellow!40,
    colframe=black!20,
    arc=2mm, 
    boxrule=0.5pt, 
    boxsep=0mm,
    before skip=5mm,
    after skip=5mm
]
\textbf{Finding 2.2:} \textit{The inter-sequence representation contraction direction aligns with the favorite token.}
\label{finding:contraction_direction}
\end{tcolorbox}

Specifically, for each of the 100 randomly initialized models, we feed 2,000 sequences of length 1,024 consisting of random tokens. 
The results demonstrate a $76\%$ overlap. 
This indicates a significant alignment between the contracted direction in the representation space and the embedding of the most likely token.

Combining Findings 2.1 and 2.2, we can draw the following conclusion.
\begin{tcolorbox}[
    enhanced,
    colback=LightGoldenrodYellow!40,
    colframe=black!20,
    arc=2mm, 
    boxrule=0.5pt, 
    boxsep=0mm,
    before skip=5mm,
    after skip=5mm
]
\textbf{Finding 2:} \textit{The extreme next-token prediction preference stems from the inter-sequence representation contraction.}
\end{tcolorbox}

In the next section, we dive deeper into the transformer model architecture to expose the root cause and underlying mechanism of the representation contraction.

\subsection{Architectural Ablation \label{sec:arch_aba}}
A transformer consists of an interweaving of self-attention and MLP modules. 
To attribute the source of the representation contraction, we analyze the self-attention and MLP modules in isolation. 
For simplicity, we omit the embedding layer and directly feed random Gaussian vectors into the transformer to simulate the embeddings of random sequences.
Crucially, these input vectors are sampled with a mean and standard deviation strictly identical to the initialization strategy of the omitted embedding layer.
This simplification prevents token overlap from random sampling that would complicate the similarity analysis.
Unless explicitly stated otherwise, all subsequent experiments utilize this Gaussian input setting.

We evaluate three architectural configurations: 
\begin{itemize}
    \item full transformer model (self-attention + MLP); 
    \item model with only self-attention blocks (self-attention-only); 
    \item model with only MLP blocks (MLP-only). 
\end{itemize}
Each variant consists of 12 layers and is probed using random sequences of Gaussian vectors to compare their internal processing dynamics.
First, we characterize the \textit{next-token preference} for each variant following the procedure detailed in Section \ref{sec:ntp_preference}.
The resulting distributions are shown in Figure \ref{fig:next_token_bias_orthogonal}. 
Second, we quantify the \textit{inter-sequence representation contraction} by computing the pairwise cosine similarities between the last-layer last-token representations (extracted from the final LayerNorm) across all input sequences. The average value and statistical significance are reported in Table \ref{tab:cosine_sim_results}.

\begin{figure}[t]
    \centering
    \includegraphics[width=0.82\textwidth]{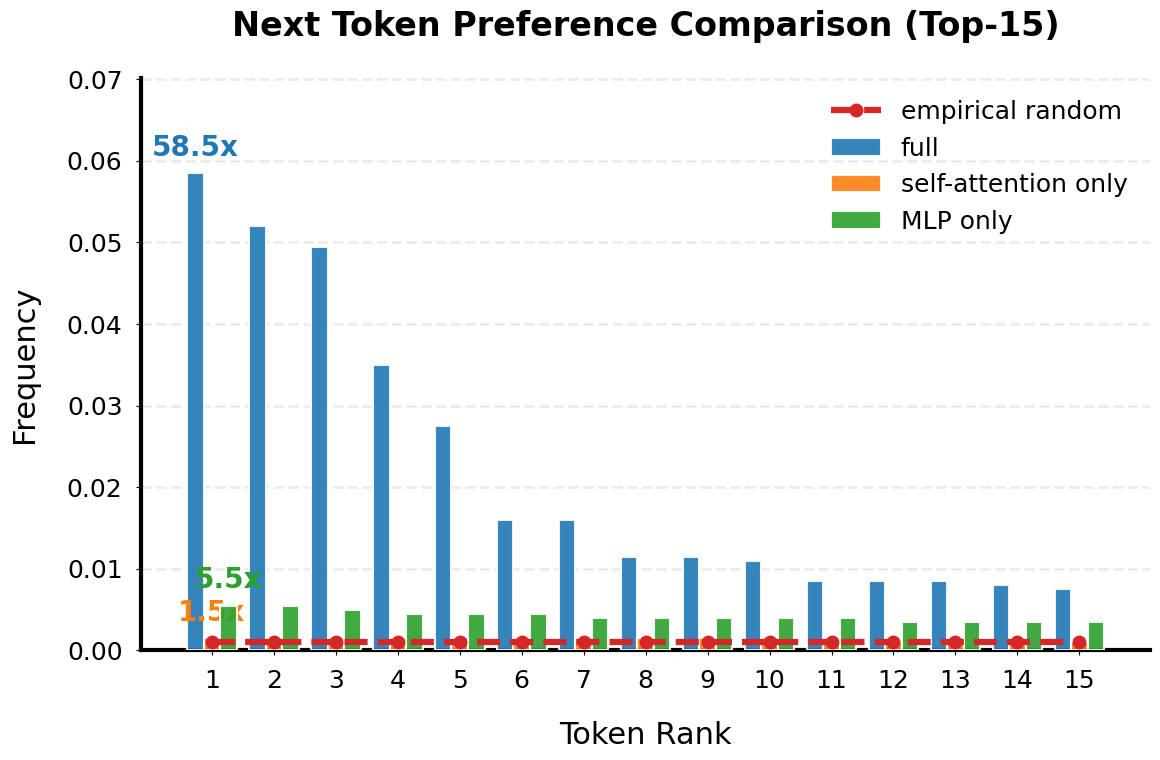}
     \vspace{-5pt}
    \caption{Next-token preference induced by self-attention and MLP modules separately and combined. The self-attention-only model (orange) is flat, aligning with the empirical random baseline, while the MLP-only (green) and full (blue) models show strong preference.}
    \label{fig:next_token_bias_orthogonal}
     \vspace{-5pt}
\end{figure}

\begin{tcolorbox}[
    enhanced,
    colback=LightGoldenrodYellow!40,
    colframe=black!20,
    arc=2mm, 
    boxrule=0.5pt, 
    boxsep=0mm,
    before skip=5mm,
    after skip=5mm
]
\textbf{Finding 3.1:} \textit{The extreme token preference and the inter-sequence representation contraction are due to MLP modules.}
\end{tcolorbox}

\begin{tcolorbox}[
    enhanced,
    colback=LightGoldenrodYellow!40,
    colframe=black!20,
    arc=2mm, 
    boxrule=0.5pt, 
    boxsep=0mm,
    before skip=5mm,
    after skip=5mm
]
\textbf{Finding 3.2:} \textit{While attention modules alone do not introduce extreme token preference nor inter-sequence representation contraction, their pairing with MLP modules boosts the severity of the abnormalities.}
\end{tcolorbox}

As can be seen in Figure \ref{fig:next_token_bias_orthogonal}, the self-attention-only model (orange) exhibits a flat histogram that closely aligns with the empirical random baseline, indicating no clear token preference.
The MLP-only model (green) produces a clear spiky preference for specific outlier tokens.
The full model (blue) produces the most extreme preference outliers, far exceeding the other two.
Similar interpretations can also be made from Table~\ref{tab:cosine_sim_results}, where the self-attention-only model maintains near-orthogonal representations, whereas the MLP-only model exhibits significant contraction, with the full model displaying the most severe contraction.

\begin{table}[t]
\centering
{
\caption{\small
    Architectural ablation for extreme next-token preference by comparing the average pairwise cosine similarity of the last-token representations across different sequences and their statistical significance.
    $\dagger$ The $p$-value is reported as 0.00 due to numerical underflow in floating-point precision. Refer to \cref{sec:appendix_consistent_details} for the detailed calculation protocol.}
    \small \label{tab:cosine_sim_results}
\setlength{\tabcolsep}{44pt}
\begin{tabular}{lrr}
\toprule
Model Mode & Avg. Sim & $p$-value \\
\midrule
full Model & $0.46$ & $0.00^{\dagger}$ \\
self-attention-only & $1.11 \times 10^{-4}$ & $0.16$ \\
MLP-only & $0.25$ & $0.00^{\dagger}$ \\
\bottomrule
\end{tabular}
}
\end{table}

These results suggest that the MLP module is the primary driver of this initial representation contraction. 
Furthermore, the interaction between the self-attention and MLP modules creates a significantly more pronounced contraction effect, which in turn causes the strong next-token preference seen in the full model.
In the following sections, we conduct deeper investigations to uncover the underlying mechanism.

\subsection{MLP-Induced Contraction of Inter-Sequence Representations} \label{sec:MLP_contract}
Our ablation study on the model architecture reveals that MLP blocks alone are sufficient to induce severe token preference bias and representation contraction. In this section, we investigate how such bias originates and propagates through the network.

We first zoom in on an MLP-only architecture and analyze its layer-wise behavior to pinpoint which components within the MLP are responsible for this contraction. Following the same pipeline as before, we generate $N=2,000$ distinct input sequences sampled from Gaussian distribution.
These sequences are processed through a stack of $L=12$ randomly initialized MLP blocks, following the standard GPT-style initialization (details in Appendix~\ref{sec:appendix_init_details}).
After each block, we compute the average and standard deviation of the pairwise cosine similarity across all $N$ last-token representations to track the evolution of inter-sequence contraction. Figure~\ref{fig:MLP_inter_collapse} shows that the average of pairwise cosine similarity steadily increases as the representations pass through successive MLP blocks.

\begin{figure}[h!]
    \centering
    \includegraphics[width=0.8\textwidth]{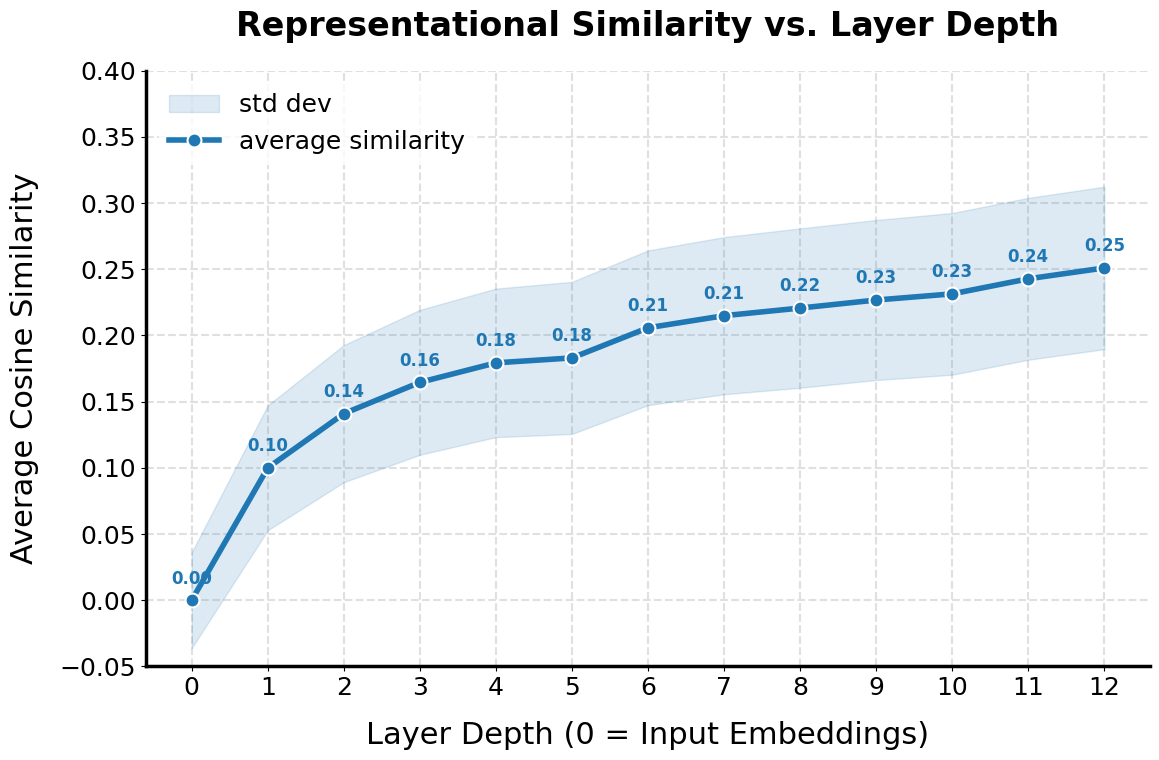} 
     \vspace{-8pt}
    \caption{
        Average and standard deviation of the pairwise cosine similarity between the last-token representations of different sequences, measured after each successive MLP block. 
    }
    \label{fig:MLP_inter_collapse}
     \vspace{-10pt}
\end{figure}

To understand this phenomenon, we examine the internal structure of the MLP block.
In the MLP-only architecture, each MLP block consists of the pre-MLP LayerNorm, the two-layer perceptron with GELU activation, and the subsequent residual connection. Specifically,
\begin{align*}
\bm{Y} &= \mathrm{MLP}\big(\mathrm{Norm}(\bm{X})\big) \oplus \bm{X},\\
\mathrm{MLP}(\bm{X}) &= \mathrm{GELU}\big(\bm{X}\bm{W}_\text{up}\big)\,\bm{W}_\text{down}.
\end{align*}

Since random initialized linear projections are unlikely to introduce significant representation contraction, we hypothesize that the contraction is caused by the \textit{asymmetric activation function} in the MLP block.

\begin{tcolorbox}[
    enhanced,
    colback=LightGoldenrodYellow!40,
    colframe=black!20,
    arc=2mm, 
    boxrule=0.5pt, 
    boxsep=0mm,
    before skip=5mm,
    after skip=5mm
]
\textbf{Finding 4:} \textit{Asymmetric nonlinear activation in MLP can cause inter-sequence representation contraction.}
\end{tcolorbox}

To isolate the role of nonlinear activation from the effects of normalization and residual connections, we first consider a simplified MLP setting with both components removed.
\begin{definition}[MLP$_0$ block]
\label{def:relu_MLP}
The simplified MLP block, denoted as MLP$_0$, processes the input through a single feedforward pass without residual paths or normalization. The output of the MLP$_0$ block is 
$\bm{Y} = \phi\big(\bm{X}\bm{W}_\text{up}\big)\,\bm{W}_\text{down}$ where $\phi$ denotes the activation function.
\end{definition}
For convenience, we mainly consider two choices for $\phi$:
(1) \textit{ReLU}, serving as an asymmetric nonlinear baseline; and (2) \textit{tanh}, serving as a symmetric nonlinear baseline.
For each of the these MLP$_0$ block variants, similar to the standard model, we calculate the pairwise cosine similarities between the last-token representations and track its progression as we include more blocks\footnote{Crucially, to ensure a controlled comparison, both baselines are initialized with identical random weights. }.
Figure~\ref{fig:MLP_relu_vs_tanh)_inter_collapse} summarizes the results. For comparison, the standard MLP block is also included.

\begin{figure}[t]
    \centering
    \includegraphics[width=0.8\textwidth]{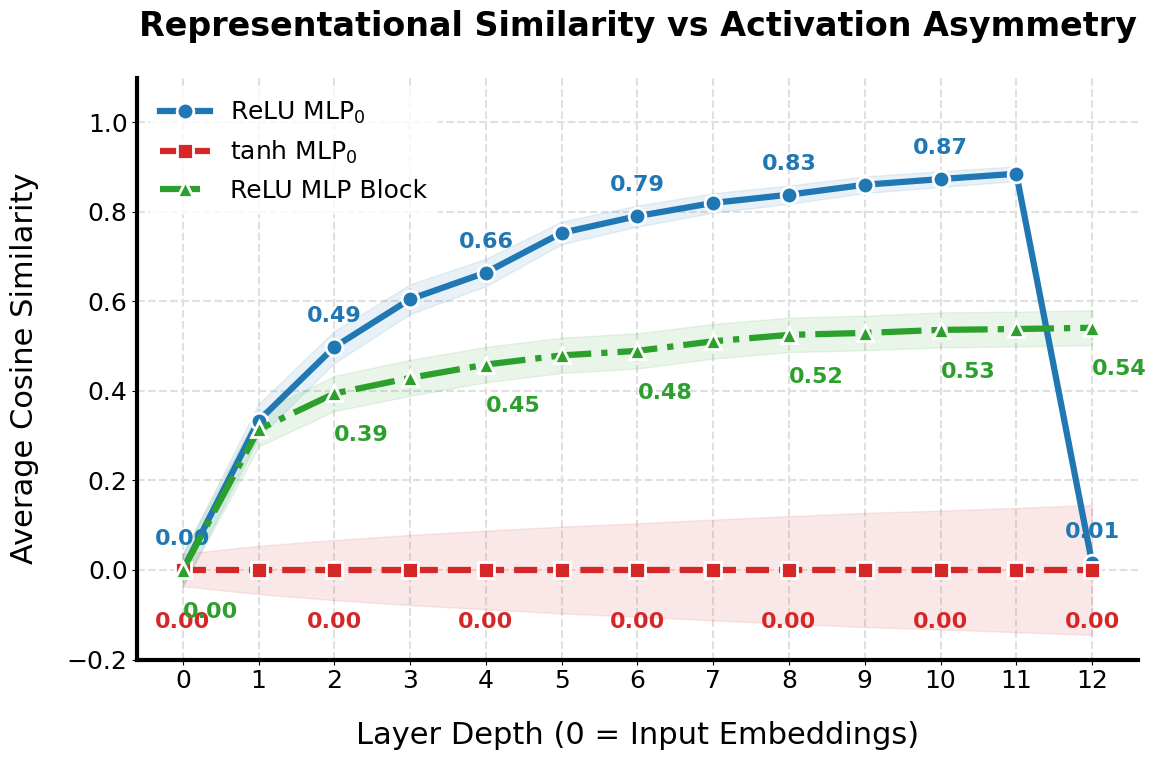}
    \vspace{-8pt}
    \caption{
        Average pairwise cosine similarity analysis. 
        We compare the asymmetric ReLU (blue) against the symmetric tanh (red) in a simplified setting without LayerNorm or residuals to isolate the effect of activation symmetry. 
        The full ReLU based MLP block with residual connections and LayerNorm (green) is included for reference, demonstrating that the contraction phenomenon persists in the standard architecture but is moderated by the residual mechanism.
    }
    \label{fig:MLP(relu_vs_tanh)_inter_collapse}
    \vspace{-1pt}
\end{figure}

A stark contrast is observed in Figure~\ref{fig:MLP(relu_vs_tanh)_inter_collapse}. 
While the symmetric tanh baseline (red dashed line) maintains an average pairwise cosine similarity of approximately zero, the asymmetric ReLU baseline (blue line) exhibits a rapid and significant increase in similarity\footnote{The drop in deeper layers for the ReLU model may be attributed to the representations exponentially collapsing towards the zero vector due to vanishing variance, a known phenomenon in deep networks that lack LayerNorm and residual connections~\citep{pmlr-v9-glorot10a, he2015delving}}.
The standard MLP-only (green line) model, which incorporates ReLU activation alongside residual connections and LayerNorm, also demonstrates a steady increase in similarity, albeit at a more moderate rate compared to the standalone MLP$_0$.
This difference is attributable to the residual connections, which preserve input diversity and dampen the rapid contraction observed in the residual-free setting.
Together, these empirical results support our hypothesis that activation asymmetry is the primary driver of inter-sequence representation contraction.

To ground these observations, we provide theoretical evidence demonstrating that asymmetric activation functions such as ReLU provably induce extreme inter-sequence representation contraction in the MLP$_0$ setting.

\begin{proposition}
\label{lem:asymmetric_collapse}
Let $\bm X_1$ and $\bm X_2$ be two independent Gaussian vectors with mean zero and covariance $\sigma^2\bm{I_d}$. 
Denote $\bm Z_l^{\mathrm{ReLU}}(\bm X)=\mathrm{MLP}_0^{(l)}\circ\cdots\circ\mathrm{MLP}_0^{(1)}(\bm X)$ as the output after $l$ layers of independent MLP mappings with ReLU activation.
Then, the expected cosine similarity, $\bar{\rho}_l^{\mathrm{ReLU}}=\mathbb{E}\left(\rho(\bm Z_l^{\mathrm{ReLU}}(\bm X_1), \bm Z_l^{\mathrm{ReLU}}(\bm X_2))\right)$, satisfies:
\begin{itemize}
    \item 
    $\bar{\rho}_1^{\mathrm{ReLU}}=1/\pi$;
    \item 
    $\bar{\rho}_l^{\mathrm{ReLU}}$ is monotonically increasing with $l$;
    \item 
    $\lim_{l\to\infty}\bar{\rho}_l^{\mathrm{ReLU}}=1$.
\end{itemize}
In contrast, if ReLU is substituted by tanh, $\mathbb{E}(\bar\rho_l^{\mathrm{tanh}})=0$ for any $l$.
\end{proposition}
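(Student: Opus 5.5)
We will argue in the wide-width regime, where the cosine similarity concentrates on a deterministic value. We will state this explicitly, reading $\bar\rho_l^{\mathrm{ReLU}}$ as the limit as $d,d_{\mathrm{MLP}}\to\infty$ of the expected cosine, equivalently a ratio of expected inner products. The entries of $\bm W_{\mathrm{up}}$ and $\bm W_{\mathrm{down}}$ are i.i.d.\ centered Gaussians, independent across layers.

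\textbf{Step 1: one layer of ReLU.} Fix inputs $\bm x_1,\bm x_2$ with angle $\theta$. The pre-activations $(\bm w^\top\bm x_1,\bm w^\top\bm x_2)$ of each hidden unit form a centered bivariate Gaussian with correlation $\cos\theta$. The hidden units are i.i.d.\ across the $d_{\mathrm{MLP}}$ coordinates. By the law of large numbers,
\[
\frac{\langle \phi(\bm W_{\mathrm{up}}^\top\bm x_1),\phi(\bm W_{\mathrm{up}}^\top\bm x_2)\rangle}{\|\phi(\bm W_{\mathrm{up}}^\top\bm x_1)\|\,\|\phi(\bm W_{\mathrm{up}}^\top\bm x_2)\|}
\]
concentrates on the normalized first-order arc-cosine kernel
\[
f(\rho)=\frac{\sqrt{1-\rho^2}+(\pi-\arccos\rho)\,\rho}{\pi},\qquad \rho=\cos\theta .
\]
This follows from the Cho--Saul computation of $\mathbb E[\mathrm{ReLU}(u)\mathrm{ReLU}(v)]$ divided by $\mathbb E[\mathrm{ReLU}(u)^2]=\tfrac12\sigma^2$.

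The random Gaussian map $\bm W_{\mathrm{down}}$ then approximately preserves inner products and norms, since $\mathbb E[\bm W\bm W^\top]\propto \bm I$ and there is concentration in $d$. Hence the output cosine after one block also concentrates on $f(\rho)$. For independent inputs $\bm X_1,\bm X_2\sim\mathcal N(0,\sigma^2\bm I_d)$, the initial cosine concentrates at $\rho_0=0$, which gives $\bar\rho_1^{\mathrm{ReLU}}=f(0)=1/\pi$.

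\textbf{Step 2: iterate the map.} Because layers are independent, the same argument applies conditionally at each layer, so $\bar\rho_l=f^{(l)}(0)$. We then study the scalar map $f$ on $[0,1]$. Its derivative is $f'(\rho)=(\pi-\arccos\rho)/\pi\ge 0$, so $f$ is nondecreasing. We also need $f(\rho)>\rho$ for $\rho<1$, which holds because
\[
f(\rho)-\rho=\frac{\sqrt{1-\rho^2}-\rho\arccos\rho}{\pi}.
\]
The function $g(\rho)=\sqrt{1-\rho^2}-\rho\arccos\rho$ has $g'(\rho)=-\arccos\rho<0$ on $[0,1)$ and $g(1)=0$, so $g>0$ on $[0,1)$. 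Since $f(1)=1$, the sequence $\rho_l$ is strictly increasing and bounded by $1$, so it converges to a fixed point of the continuous map $f$. The only fixed point in $[0,1]$ is $1$, which gives monotonicity and the limit $1$.

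\textbf{Step 3: tanh.} This case is exact at any width and needs no limit. Since tanh is odd and the layers are linear, $\bm Z_l^{\tanh}(-\bm x)=-\bm Z_l^{\tanh}(\bm x)$ for every realization of the weights. Because $\bm X_2\overset{d}{=}-\bm X_2$ and $\bm X_2$ is independent of $\bm X_1$ and of the weights, $\rho(\bm Z_l(\bm X_1),\bm Z_l(\bm X_2))$ and its negative have the same law. The cosine is bounded, so its expectation is $0$ for every $l$.

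\textbf{Main obstacle.} The delicate part is Step 1's passage from random finite-width cosines to the deterministic kernel recursion. The expectation of a ratio is not the ratio of expectations, and errors could accumulate across layers. We plan to handle this with the standard NNGP/mean-field argument: concentration of empirical kernels at rate $O(d_{\mathrm{MLP}}^{-1/2})$ per layer, together with Lipschitz continuity of $f$ (since $|f'|\le 1$). For each fixed $l$, this gives convergence of $\bar\rho_l$ to $f^{(l)}(0)$ as width grows. If full rigor is out of reach, we will state the ReLU claims as holding in the infinite-width limit.
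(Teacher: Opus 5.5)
Your proposal is correct. On the ReLU side it follows the paper's route: in the infinite-width reading, the cosine obeys the normalized degree-one arc-cosine recursion $\rho\mapsto f(\rho)$ started at $\rho_0=0$. It is tighter in two places, though.

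First, you make explicit that $\bm W_{\mathrm{down}}$ preserves cosines in the wide limit, which the paper leaves implicit. Second, your computation $\frac{d}{d\rho}\bigl(f(\rho)-\rho\bigr)=-\arccos(\rho)/\pi<0$ on $[0,1)$, together with $f(1)=1$, actually proves that $f(\rho)>\rho$ there and that $1$ is the \emph{unique} fixed point. The paper only asserts this from $f(0)>0$ and $f'(1)=1$, which do not suffice on their own. It also appeals to $1$ being the only \emph{stable} fixed point, whereas the limit argument needs uniqueness; moreover $f'(1)=1$ makes $1$ only marginally stable.

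For tanh your route is genuinely different. The paper stays in the infinite-width limit, where the two inputs' pre-activations are independent zero-mean Gaussians so that $\mathbb{E}[\tanh(U_1)\tanh(U_2)]=0$, and then inducts over layers. You instead use that the bias-free tanh network is an odd map for every weight realization, and that $\bm X_2\overset{d}{=}-\bm X_2$ independently of $\bm X_1$ and the weights. The cosine is therefore symmetric in law.

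Your argument is exact at every width and depth and needs neither Gaussian weights nor a limit. The paper's version keeps both activations in the same kernel-recursion language, where the contrast reduces to the tanh map fixing $0$ while the ReLU map sends $0$ to $1/\pi$.
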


Proposition~\ref{lem:asymmetric_collapse} establishes that asymmetric activations such as ReLU inherently drive inter-sequence similarity toward total collapse ($\rho=1$), whereas symmetric activations such as tanh maintain orthogonality. 
The formal proof, where the mathematical role of symmetry is central, is provided in~\cref{sec:proof_prop2}.

While the analysis in Proposition \ref{lem:asymmetric_collapse} establishes the fundamental role of asymmetry in a simplified setting, in practice, we find that \textit{LayerNorm} also plays an important role by regulating the degree of asymmetry engaged in the GELU activation.

\begin{tcolorbox}[
    enhanced,
    colback=LightGoldenrodYellow!40,
    colframe=black!20,
    arc=2mm, 
    boxrule=0.5pt, 
    boxsep=0mm,
    before skip=5mm,
    after skip=5mm
]
\textbf{Finding 4.1:} \textit{(Interplay between GELU and LayerNorm) LayerNorm activates the inherent asymmetry of GELU by increasing input variance, which in turn causes inter-sequence representation contraction.}
\end{tcolorbox}

In practice, standard transformer MLP blocks typically adopt activation functions from the GLU family, such as GELU~\citep{hendrycks2023gaussianerrorlinearunits, devlin2019bert} and SwiGLU~\citep{shazeer2020glu,bai2023qwentechnicalreport}. This introduces an additional layer of complexity: the interplay between LayerNorm and asymmetric activation. Take GELU as a representative example, $\mathrm{GELU}(x) = x \Phi(x)$, where $\Phi(x)$ is the cumulative distribution function (CDF) of the standard normal distribution. 

As visualized in~\cref{fig:gelu_plot}, although GELU is fundamentally asymmetric, this property becomes effective only when the inputs span a sufficiently wide range of values.
LayerNorm dictates whether this asymmetry is ``engaged'' by regulating the input variance.
With LayerNorm, the inputs exhibit a relatively large standard deviation (e.g., $\sigma \approx 0.55$), forcing them to cover the asymmetric region of GELU. 
In contrast, without LayerNorm, the inputs remain concentrated near the origin (e.g., $\sigma \approx 0.011$), where GELU behaves approximately symmetrically.

\begin{figure}[t]
    \centering
    \includegraphics[width=0.81\textwidth]{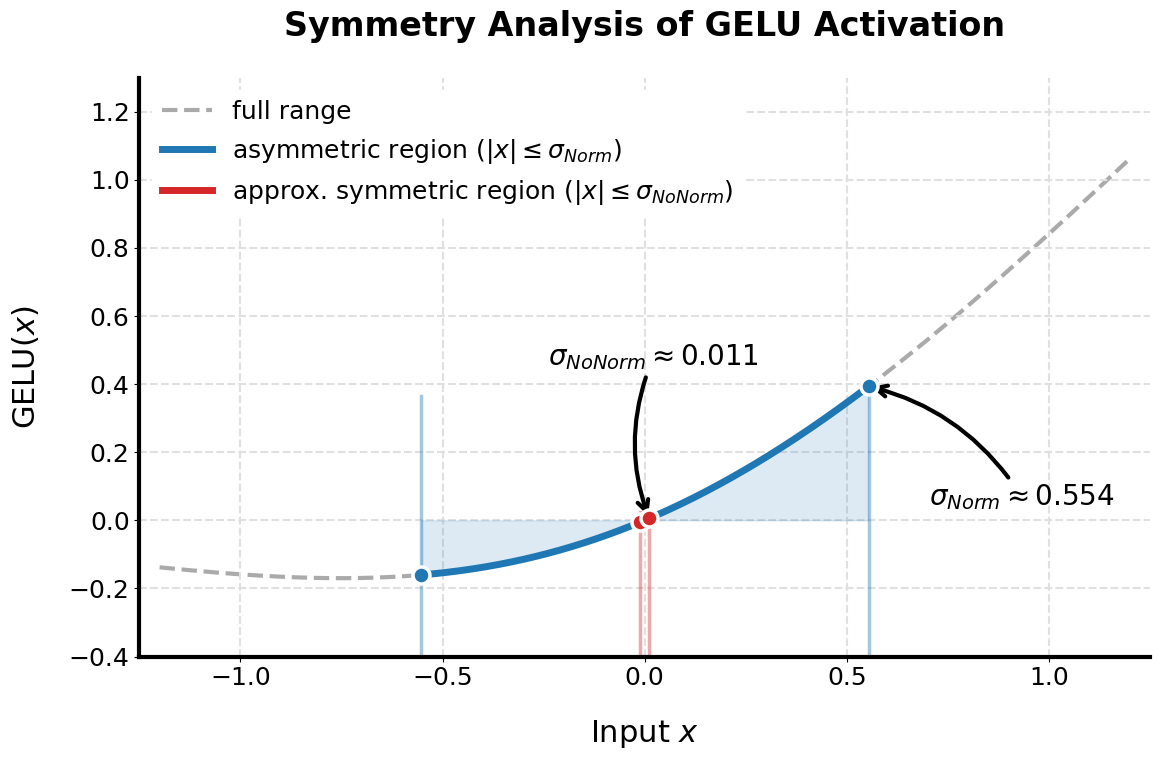}
    \vspace{-10pt}
    \caption{Interplay between GELU and LayerNorm. Inputs with LayerNorm (\textit{Norm}) are pushed into the asymmetric regime of GELU, whereas inputs without LayerNorm (\textit{NoNorm}) stay near zero where GELU is approximately symmetric.}
    \label{fig:gelu_plot}
     \vspace{-10pt}
\end{figure}

\subsection{Interplay between MLP and Self-attention}\label{sec:MLP_attn_interplay}
Although our previous results show that self-attention-only transformers do not inherently cause representation contraction (\cref{fig:MLP_inter_collapse}), the contraction induced by the MLP is substantially amplified by the attention component, thereby strengthening the token-preference bias.
In this section, we analyze the mechanism underlying this synergistic amplification.

First, we extend the experimental setting as in Table~\ref{tab:cosine_sim_results}, to a more fine-grained, layer-by-layer setup to better evaluate the interplay between self-attention and MLP.
We quantify the inter-sequence representation contraction across layers by computing the pairwise cosine similarity between last-token representations of different input sequences. 
Figure~\ref{fig:layerwise_similarity_collapse} illustrates the layer-wise evolution of this similarity across the full, MLP-only, and self-attention-only models.

\begin{figure}[t]
    \centering
    \includegraphics[width=0.8\textwidth]{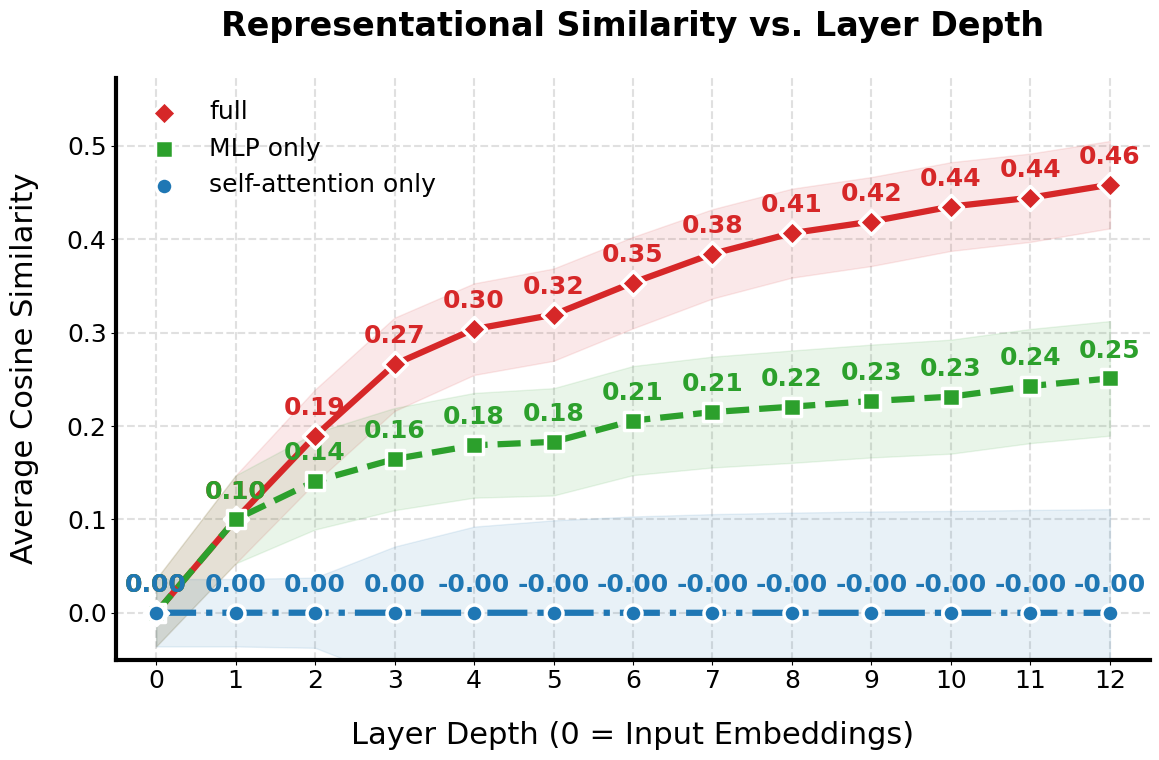}
    \vspace{-8pt}
    \caption{
        Pairwise cosine similarity of the last-token representation across different sequences, measured over the output of each block. 
        The self-attention-only model (blue) remains perfectly orthogonal. 
        The MLP-only model (green) shows a steady increase in similarity. 
        The full GPT model (red) shows a faster and more severe contraction.
    }
     \vspace{-10pt}
    \label{fig:layerwise_similarity_collapse}
\end{figure}

\begin{tcolorbox}[
    enhanced,
    colback=LightGoldenrodYellow!40,
    colframe=black!20,
    arc=2mm, 
    boxrule=0.5pt, 
    boxsep=0mm,
    before skip=5mm,
    after skip=5mm
]
\textbf{Finding 3.2a:} \textit{A self-attention layer can amplify the inter-sequence representation contraction initiated by the MLP, even more so than having an extra MLP layer.}
\end{tcolorbox}

Figure~\ref{fig:layerwise_similarity_collapse} reveals two key divergence points in the layer-wise evolution of representation contraction:
\begin{itemize}
    \item 
    \textbf{Initial divergence (Layer 1)}: A clear separation emerges between the self-attention-only model and the other two configurations. 
    While the full and MLP-only models show an immediate increase in inter-sequence cosine similarity, the self-attention-only model remains nearly flat throughout, confirming that self-attention alone does not initiate contraction.
    \item 
    \textbf{Amplification divergence (Layer 2)}: A second divergence point emerges between the full and MLP-only models. 
    Starting from layer 2, the full model—which alternates between attention and MLP sublayers—exhibits substantially stronger contraction than the MLP-only model.
    This gap widens progressively with depth, supporting the hypothesis that self-attention acts as a powerful amplifier of the contraction effect initially seeded by the MLP blocks.
\end{itemize}

To explicitly test the amplification hypothesis, we isolate the interaction between modules by comparing minimal two-block configurations: an MLP-MLP structure versus an MLP-attention structure. 
For analytical tractability, we retain the simplified ReLU MLP$_0$ block without residual connections and LayerNorm (Definition~\ref{def:relu_MLP}). 
We further introduce a simplified version of self-attention as follows.

\begin{definition}[Attn$_0$ block]
    \label{def:attn0}
Define the Attn$_0$ block as a simplified attention module that only outputs the average of previous input vectors:
\begin{align*}
    \mathrm{Attn}_0(\bm x_1, \cdots, \bm x_T) = \frac{1}{T} \sum_{i=1}^T \bm x_i.
\end{align*}
\end{definition}

This definition distills the standard self-attention mechanism into two core assumptions: \textit{attention allocation} and \textit{value aggregation}. 
Definition \ref{def:attn0} can be thought of as making the following simplification assumptions. 
\begin{itemize}
    \item \textbf{Uniform attention weights:}
    We assume attention weights to be uniform, i.e, attending equally to all previous tokens. 
    This assumption is reasonable under random initialization, since the query-key dot products are centered at zero with minimal variance due to the small initialization scale. 
    As a result, the attention weights after the softmax function tend to be close to uniform. 
    
     \item \textbf{Identity $W_v$ and $W_o$:} 
     We assume the value and output projection matrices are identity matrices.
     This is also reasonable since at initialization, the value matrix $W_v$ and the output projection matrix $W_o$ are both Gaussian random matrices, which do not inherently introduce representation contraction. 
\end{itemize}

Under the simplified attention setting, we compare the following two-block configurations with a shared ``pre-contracted'' one-layer MLP$_0$ layer $\bm{h}= \mathrm{MLP_0}\left( \bm{x} \right)$.
\begin{itemize}
    \item 
    Attn$_0$ $\circ$ MLP$_0(\bm{x})$: $\bm{h}$ is further processed by a simplified Attn$_0$ layer with output 
    $
        \bm{y}_\text{attn} = \mathrm{Attn_0}(\bm{h}).
    $
    \item 
    MLP$_0$ $\circ$ MLP$_0(\bm{x})$: $\bm{h}$ is further processed by a second MLP$_0$ layer with output
    $
        \bm{y}_\text{MLP} = \mathrm{MLP_0}(\bm{h}).
    $
\end{itemize}

In the experimental setup described above, a common first-layer MLP$_0$ is employed to induce an initial, shared representation contraction. 
If self-attention acts as an amplifier of this effect, the Attn$_0$ $\circ$ MLP$_0(\bm{x})$ configuration should exhibit a stronger contraction than the MLP$_0$ $\circ$ MLP$_0(\bm{x})$ counterpart. 
To quantify the contraction, we follow similar procedures as in Table~\ref{tab:cosine_sim_results} and calculate the pairwise cosine similarities between the last-layer last-token
representations across all input sequences.

Results are summarized in Table~\ref{tab:amplifier_results}, which confirms this hypothesized trend. Starting from an initial contraction of 0.31 after the first MLP$_0$ layer, the Attn$_0$ module amplifies the inter-sequence similarity to 0.98, far exceeding the contraction achieved by the MLP-only baseline (0.49).

\begin{table}[t]
\centering
\small
\caption{ 
    Analysis of the amplification effect 
    by comparing the average pairwise cosine similarity of the last-token representations across different sequences and their statistical significance. $\dagger$ The $p$-value is reported as 0.00 due to numerical underflow in floating-point precision.
}
\setlength{\tabcolsep}{45pt}
\label{tab:amplifier_results}
\begin{tabular}{lrr}
\toprule
Experiment setting & Avg. Sim & $p$-value \\
\midrule
$\bm{h}=\mathrm{MLP_0}(\bm{x}) $ & 0.31 & $0.00^{\dagger}$ \\
$\bm{y}_\text{MLP} = \mathrm{MLP_0}(\bm{h})$ & 0.49 & $0.00^{\dagger}$ \\
$\bm{y}_\text{attn} = \mathrm{Attn_0}(\bm{h})$ & \textbf{0.98} & $0.00^{\dagger}$ \\
\bottomrule
\end{tabular}
\end{table}

The simplifications introduced in Definition \ref{def:relu_MLP} and Definition \ref{def:attn0} allow for a rigorous theoretical analysis of of this amplification effect. 
We characterize this relationship in the following proposition:

\begin{proposition}
    [self-attention as a contraction amplifier]
\label{lem:value_aggregation_amplifier}
Let the sequence length be $T$. The expected inter-sequence cosine similarity $\bar\rho$ for the $MLP_0 \circ MLP_0$  case is approximately $0.49$ while that for the $Attn_0 \circ MLP_0$ case is $\frac{T}{T + \pi - 1}$, which converges to 1 as $T$ increases.
\end{proposition}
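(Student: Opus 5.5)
The plan is to work in the wide-network regime and reduce everything to the ReLU arc-cosine kernel already used for Proposition~\ref{lem:asymmetric_collapse}. For an MLP$_0$ layer with i.i.d.\ zero-mean Gaussian $\bm W_\text{up}$ and $\bm W_\text{down}$, and inputs $\bm x,\bm x'$ with cosine $\rho$, the output inner product concentrates around a multiple of $\mathbb{E}_{(u,v)}[\mathrm{ReLU}(u)\mathrm{ReLU}(v)]$. Here $(u,v)$ are jointly Gaussian with correlation $\rho$, and the multiple depends only on the weight variances. The step through $\bm W_\text{down}$ preserves inner products in expectation, up to an $O(1/\sqrt{d})$ fluctuation. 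Normalizing gives the correlation map
\begin{align*}
f(\rho)=\frac{\sqrt{1-\rho^2}+(\pi-\arccos\rho)\,\rho}{\pi},
\end{align*}
which satisfies $f(0)=1/\pi$; this is the first bullet of Proposition~\ref{lem:asymmetric_collapse}. I take this map, and the replacement of the expected cosine by the ratio of expected inner products, as given from that proof.

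\emph{The MLP$_0\circ$MLP$_0$ case.} Independent Gaussian inputs have $\rho_0\approx 0$, so after the first layer $\rho_1=f(0)=1/\pi\approx 0.318$. This matches the 0.31 in Table~\ref{tab:amplifier_results}. After the second layer the cosine is $f(1/\pi)$, computed as follows:
\begin{align*}
\arccos(1/\pi)&\approx 1.247, & (\pi-1.247)\cdot(1/\pi)&\approx 0.603, & \sqrt{1-1/\pi^2}&\approx 0.948.
\end{align*}
Hence $f(1/\pi)\approx(0.603+0.948)/\pi\approx 0.494$, which is the claimed $\approx 0.49$.

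\emph{The Attn$_0\circ$MLP$_0$ case.} Let the two sequences be $\bm x_1,\dots,\bm x_T$ and $\bm x'_1,\dots,\bm x'_T$, with all $2T$ vectors independent Gaussian. Set $\bm h_i=\mathrm{MLP}_0(\bm x_i)$ and $\bm h'_j=\mathrm{MLP}_0(\bm x'_j)$, using the shared weights. Normalize so that $\mathbb{E}\|\bm h_i\|^2=1$. By the kernel computation, every pair of distinct tokens has normalized inner product $f(0)=1/\pi$, and this holds within a sequence as well as across sequences. The outputs are $\bar{\bm h}=\frac1T\sum_i\bm h_i$ and $\bar{\bm h}'=\frac1T\sum_j\bm h'_j$. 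Expanding the inner products gives
\begin{align*}
T^2\langle\bar{\bm h},\bar{\bm h}'\rangle\approx \frac{T^2}{\pi},\qquad T^2\|\bar{\bm h}\|^2\approx T+\frac{T(T-1)}{\pi}.
\end{align*}
The cosine is therefore $\frac{T^2/\pi}{T+T(T-1)/\pi}=\frac{T}{T+\pi-1}$, which tends to $1$ as $T\to\infty$. The mechanism is that the shared positive mean component of the ReLU features, the source of the $1/\pi$ correlation, adds coherently over the $T$ tokens. The idiosyncratic parts add incoherently, so the averaging suppresses them by a factor of order $1/T$.

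\emph{Main obstacle.} The hardest step is rigor in replacing $\mathbb{E}[\rho]$ by the ratio of expected inner products. I would handle it by a concentration argument as the width grows. Each inner product is an average of $d_\text{MLP}$ (respectively $d$) i.i.d.\ terms with bounded fourth moments, so it concentrates at rate $O(d^{-1/2})$. The cosine is a smooth function of these inner products away from zero norm, so a delta-method or dominated-convergence argument transfers the concentration to its expectation. This is consistent with the word ``approximately'' in the statement, which I read as referring to this infinite-width limit.
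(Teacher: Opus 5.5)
Your proposal is correct and follows the paper's proof essentially step for step: the arc-cosine map is applied twice starting from $\rho_0=0$, giving $1/\pi$ and then $f(1/\pi)\approx 0.49$. For the attention case, expanding $\langle\bar{\bm h},\bar{\bm h}'\rangle$ and $\|\bar{\bm h}\|^2$ with uniform pairwise correlation $1/\pi$ (within and across sequences) yields $T/(T+\pi-1)$. Your width-concentration and bounded-convergence argument, which justifies replacing the expected cosine by a ratio of expected inner products, is a useful addition, since the paper takes that step in the infinite-width limit without comment.
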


The proof is provided in Appendix~\ref{sec:proof_prop4}. Proposition~\ref{lem:value_aggregation_amplifier} illustrates that if the inputs are already partially 
contracted (e.g., due to the asymmetric activations in the MLP$_0$), the attention's value aggregation process reinforces the shared directional bias while averaging out the unique, uncorrelated components.
Consequently, the output representations will be
significantly more contracted (i.e., exhibit higher inter-sequence cosine similarity).
This analytical derivation aligns remarkably well with the empirical results in Table~\ref{tab:amplifier_results}: the MLP$_0$ $\circ$ MLP$_0$ configuration yields $\bar\rho\approx 0.49$, while the Attn$_0$ $\circ$ MLP$_0$ configuration gives $\bar\rho=\frac{T}{T+\pi-1}\approx 0.98$ when evaluated at sequence length $T=128$.

\subsection{Attention-Induced Intra-Sequence Representation Contraction}\label{sec:attn_contract}
Unlike MLP, which applies an identical operation to every position, the self-attention mechanism inherently differentiates between positions by aggregating information across the entire sequence.
To understand how this mechanism amplifies global, inter-sequence contraction, we must zoom in on its effect on token representations within a single sequence.
We demonstrate that self-attention causes the hidden representations of distinct tokens to become increasingly similar, a phenomenon we term \textbf{\textit{intra-sequence contraction}}.
This effect provides a mechanistic explanation for the amplification of inter-sequence contraction: when representations across sequences are already partially aligned toward a common direction, seeded by the MLP blocks, self-attention further reinforces this shared direction inside each sequence through local aggregation, thereby amplifying inter-sequence contraction.

To empirically characterize this intra-sequence contraction effect, we sample a single sequence of isotropic Gaussian random vectors ($T=512$ tokens) and feed it through $L=12$ successive standard self-attention blocks. 
After each block, we compute the pairwise cosine similarity among token hidden representations \textit{within the sequence}. 
We repeat this procedure for $N=2000$ independent sequences and report the average and standard deviation.
We further examine how intra-sequence contraction varies as a function of sequence length by evaluating shorter sequences ($T=16$).
The results are shown in Figure~\ref{fig:attn0_vs_real}.

\begin{tcolorbox}[
    enhanced,
    colback=LightGoldenrodYellow!40,
    colframe=black!20,
    arc=2mm, 
    boxrule=0.5pt, 
    boxsep=0mm,
    before skip=5mm,
    after skip=5mm
]
{\textbf{Finding 5:} self-attention mechanism can induce intra-sequence representation contraction, pushing hidden representations of different tokens within a single sequence toward a shared direction. The more self-attention layers, the more severe the contraction.}
\end{tcolorbox}

\begin{figure}[t]
    \centering
    \includegraphics[width=0.80\textwidth]{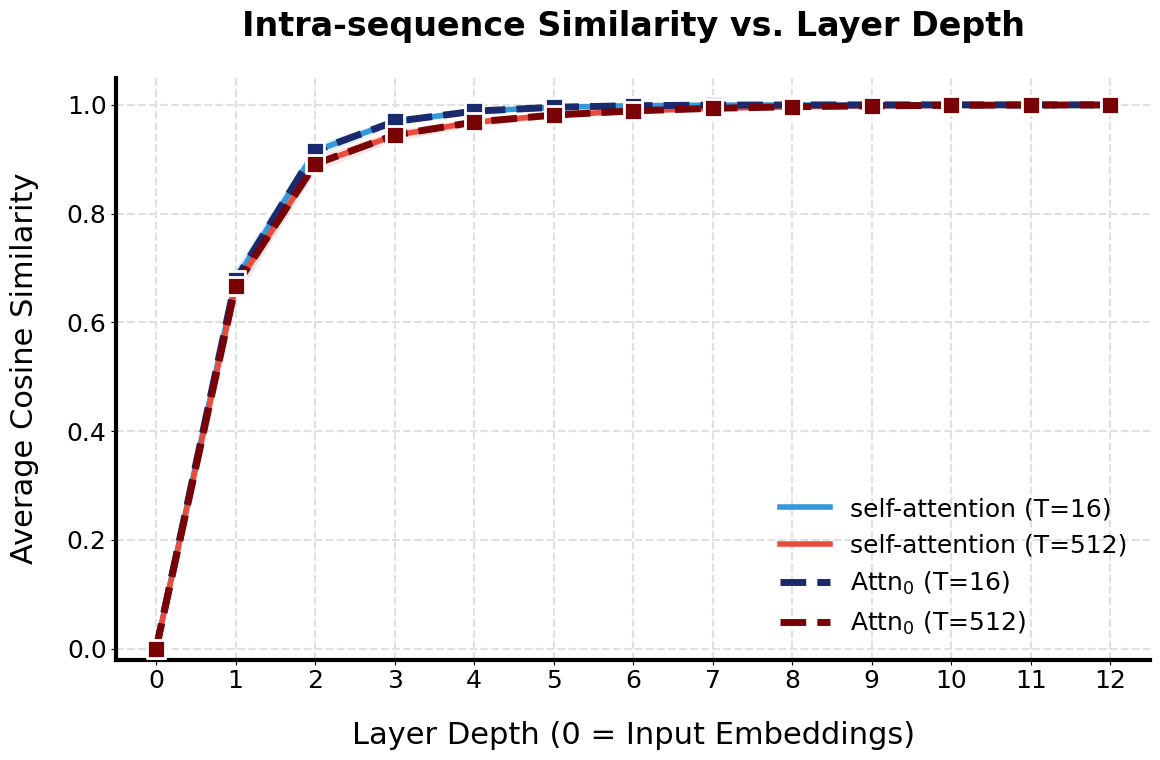} 
    \vspace{-10pt}
    \caption{
        Comparison of intra-sequence cosine similarity between the standard self-attention (solid lines) and the simplified Attn$_0$ block (dashed lines). Red and blue colors correspond to sequence lengths $T=16$ and $T=512$, respectively. The color-shaded regions indicate
        standard deviations of the average intra-sequence cosine similarity computed over 2,000 sequences. 
        The close overlap between the simplified and standard models justifies the use of Attn$_0$ for analyzing the contraction mechanism. 
    }
    \label{fig:attn0_vs_real}
     \vspace{-10pt}
\end{figure}

The solid curves in Figure~\ref{fig:attn0_vs_real} reveal two clear trends. 
First, the intra-sequence cosine similarity increases more rapidly for shorter sequences, indicating that representation contraction is more severe for smaller sequence lengths $T$. 
Second, for both sequence lengths, the similarity grows monotonically with layer depth and converges toward 1. This demonstrates that token hidden representations progressively contract toward a common direction as more self-attention layers are stacked, even when the initial inputs are drawn from an isotropic Gaussian distribution.

For a formal theoretical analysis, we leverage the simplified Attn$_0$ block defined in Definition~\ref{def:attn0}. 
We first empirically verify that this simplified model properly preserves the behavior of standard self-attention. 
As illustrated by the dashed curves in Figure~\ref{fig:attn0_vs_real}, Attn$_0$ trajectories closely match the solid curves corresponding to the full attention module for both evaluated sequence lengths. 
This strong agreement validates our simplification and suggests that, under Gaussian initialization, self-attention effectively behaves as a \textit{uniform aggregation operator}.

Using this validated framework, we next theoretically characterize how sequence length $T$ and layer depth $L$ influence the expected intra-sequence cosine similarity.

\begin{proposition}
\label{lem:intra_contraction}
Consider the simplified Attn$_0$ block setting. Denote $T$ as the sequence length. After $L$ layers of Attn$_0$, the expected intra-sequence cosine similarity $\bar\rho^\prime$ can be characterized by 
\[
\mathbb{E}\left(\bar\rho^\prime(T, L)\right)\approx  1 - \frac{1}{L^2} \left( 1 - \frac{1}{T} \right),
\]
which becomes accurate for $L \ge 3$.
\end{proposition}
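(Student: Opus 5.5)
The plan is to reduce the statement to a deterministic computation about the rows of a fixed averaging matrix, and then evaluate the resulting sums asymptotically. Write one Attn$_0$ layer, with the causal reading of Definition~\ref{def:attn0} (``average of previous input vectors''), as left multiplication by the lower-triangular matrix $\bm A\in\mathbb{R}^{T\times T}$ with $A_{tj}=1/t$ for $j\le t$ and $0$ otherwise. After $L$ layers the hidden states are $\bm Y=\bm A^L\bm X$, so each $\bm y_t=\sum_j (\bm A^L)_{tj}\bm x_j$ is a fixed linear combination of the i.i.d.\ isotropic Gaussian inputs.

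\textbf{Step 1 (reduction to Gram geometry).} Write $\bm a_t$ for the $t$-th row of $\bm A^L$. Since the inputs are i.i.d.\ $\mathcal N(0,\sigma^2\bm I_d)$, we have $\mathbb E\langle\bm y_s,\bm y_t\rangle=\sigma^2 d\,\langle\bm a_s,\bm a_t\rangle$. For large $d$, inner products and norms concentrate (the same concentration step used implicitly for Proposition~\ref{lem:value_aggregation_amplifier}). Hence
\[
\rho(\bm y_s,\bm y_t)\approx \frac{\langle\bm a_s,\bm a_t\rangle}{\|\bm a_s\|\,\|\bm a_t\|},
\]
and $\mathbb E(\bar\rho')$ is approximately the average of this deterministic quantity over pairs $s\neq t$.

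\textbf{Step 2 (rows of $\bm A^L$).} I will compute the rows explicitly. One has $(\bm A^L)_{tj}=\frac1t\sum_{j\le k_{L-1}\le\cdots\le k_1\le t}\prod_i k_i^{-1}$. In the continuum approximation, where $\bm A$ acts as $f\mapsto \frac1t\int_0^t f$, this gives
\[
(\bm A^L)_{tj}\approx \frac{1}{t}\,\frac{(\log(t/j))^{L-1}}{(L-1)!}.
\]
Substituting $u=\log(t/j)$ turns the row into a Gamma-shaped weight profile in $u$. Then $\|\bm a_t\|^2\approx \frac{1}{t}\int_0^{\log t} e^{-u}\frac{u^{2L-2}}{((L-1)!)^2}\,du$, and for $s<t$ the inner product $\langle\bm a_s,\bm a_t\rangle$ becomes an analogous integral involving $\log(t/s)$. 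I will also keep the exact value at $L=1$ as a sanity check: there $\rho=\sqrt{s/t}$.

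\textbf{Step 3 (averaging).} Next I average the pairwise cosines over $1\le s<t\le T$, parametrising by the ratio $r=s/t$. The cosine should depend essentially only on $r$ and $L$. The deficit from $1$ is governed by the relative spread $\mathrm{Var}/\mathrm{mean}^2$ of the Gamma profiles, which is of order $1/L$. After integrating over $r$ this should yield a deficit of order $1/L^2$. The finite-$T$ correction $(1-1/T)$ comes from the diagonal/discrete boundary terms: the pair-average over $T^2$ ordered pairs includes a $1/T$ fraction of trivially perfect self-pairs, together with the exact first-token contribution.

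\textbf{Main obstacle.} The hard part is Step 3. Turning the integral over $r$ of the Gamma-overlap cosine into the clean form $1-\frac{1}{L^2}(1-\frac1T)$ needs a second-order expansion in $1/L$. That expansion is only asymptotic, which is consistent with the statement's caveat that the formula is accurate only for $L\ge3$. I would first try a Laplace/Gaussian approximation of the Gamma profiles in the $u$-variable. If the resulting constant does not match, I would fall back on matching the leading $1/L^2$ term numerically against the exact discrete sums of $\bm A^L$ for small $T$.
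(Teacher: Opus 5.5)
Your Steps 1--2 and your account of the factor $(1-\frac1T)$ match the paper. It also passes to the continuum kernel $\frac1t\frac{(\ln(t/s))^{N-1}}{(N-1)!}$ for $N$ averaging steps, finds that the correlation depends only on $r=s/t$, and separates the $T$ diagonal pairs to get $1-\bar\rho'(T,L)=(1-\frac1T)\bigl(1-\bar\rho'(L)\bigr)$.

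\textbf{The gap is Step 3.} It is not a matter of carrying an expansion to second order: the mechanism you invoke does not exist. Write $\ell=\ln(1/r)$. The correlation after $N$ steps is $\rho_N=\sqrt r\sum_{k=0}^{N-1}\binom{N-1}{k}\frac{(2N-2-k)!}{(2N-2)!}\,\ell^k$. The Laplace expansion of the Gamma profiles that you plan gives $\rho_N=1-\frac{\ell^2}{16N}\bigl(1+o(1)\bigr)$. For a uniformly chosen pair $s<t$, $r$ is uniform on $[0,1]$, so $\ell\sim\mathrm{Exp}(1)$ and $\mathbb E\,\ell^2=2$. Averaging over $r$ therefore contributes a constant, not a further factor of $1/L$, and $1-\bar\rho'(L)$ is asymptotically $\frac{1}{8N}$, i.e.\ of order $1/L$. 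A correct asymptotic analysis along your route would contradict the target formula rather than prove it.

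Your fallback also fails. For fixed small $T$, $\bm A$ is triangular with eigenvalues $1,\frac12,\dots,\frac1T$, and $\bm A^L\to\bm 1\bm e_1^\top$. So the exact discrete deficit decays geometrically (like $4^{-L}$), not like $1/L^2$.

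What the paper does instead is evaluate the $r$-integral exactly, term by term, using $\int_0^1 r^{1/2}(\ln\frac1r)^k\,dr=k!\,(2/3)^{k+1}$. Under its convention of $N=L-1$ averaging steps, this gives the closed form $\bar\rho'(L)=\frac{(L-2)!}{(2L-4)!}\sum_{k=0}^{L-2}\frac{(L-2+k)!}{k!}\left(\frac23\right)^{L-1-k}$. The law $1-1/L^2$ is a finite-$L$ fit to this sum, not an asymptotic statement:
\begin{itemize}
\item it is exact at $L=3$ (value $8/9$);
\item it is within $2\times10^{-3}$ at $L=4,5$ ($0.938$ and $0.958$ versus $0.9375$ and $0.96$);
\item it drifts away afterwards (deficit $\approx 0.025$ at $L=7$ versus $1/49\approx0.020$).
\end{itemize}

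There is also an off-by-one issue. With your convention $\bm Y=\bm A^L\bm X$, the same sum is evaluated at $N=L$, which gives about $0.938$ at $L=3$ rather than $8/9$. You would reproduce the stated agreement only by adopting the paper's $N=L-1$ count.

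The missing ingredient is this exact Gamma-type integration followed by a direct comparison at small $L$. The clause ``accurate for $L\ge3$'' can at best be justified as numerical agreement for moderate $L$, which no expansion in $1/L$ will deliver.
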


The formal proof is provided in~\cref{sec:proof_prop5}. Proposition~\ref{lem:intra_contraction} establishes that for a fixed sequence length $T$, the expected intra-sequence cosine similarity increases monotonically with the number of attention layers $L$ and converges to 1 as $L \to \infty$. 
In other words, self-attention layers will progressively contract token representations within the same sequence toward a common direction. The sequence length $T$ influences the convergence rate via the factor $1 - \frac{1}{T}$: larger $T$ leads to weaker contraction, while smaller $T$ yields a stronger effect. This theoretical result aligns with the empirical trends observed in~\cref{fig:attn0_vs_real}.

Our previous experiments establishes that self-attention drives representations toward a common direction within a single sequence, this same aggregation process inevitably introduces a structural \textit{positional discrepancy}.
Because the mechanism functions analogously to a uniform aggregation operator at initialization, it creates a position-dependent variance decay in the representations.
This positional variance discrepancy substantiates our next finding.

\begin{tcolorbox}[
    enhanced,
    colback=LightGoldenrodYellow!40,
    colframe=black!20,
    arc=2mm, 
    boxrule=0.5pt, 
    boxsep=0mm,
    before skip=5mm,
    after skip=5mm
]
\textbf{Finding 6:} The self-attention mechanism induces a systematic token disparity within each sequence. Specifically, the first token maintains a unique statistical profile, acting as a structural outlier relative to all subsequent positions.
\end{tcolorbox}

\begin{figure}[t]
    \centering
    \includegraphics[width=0.8\textwidth]{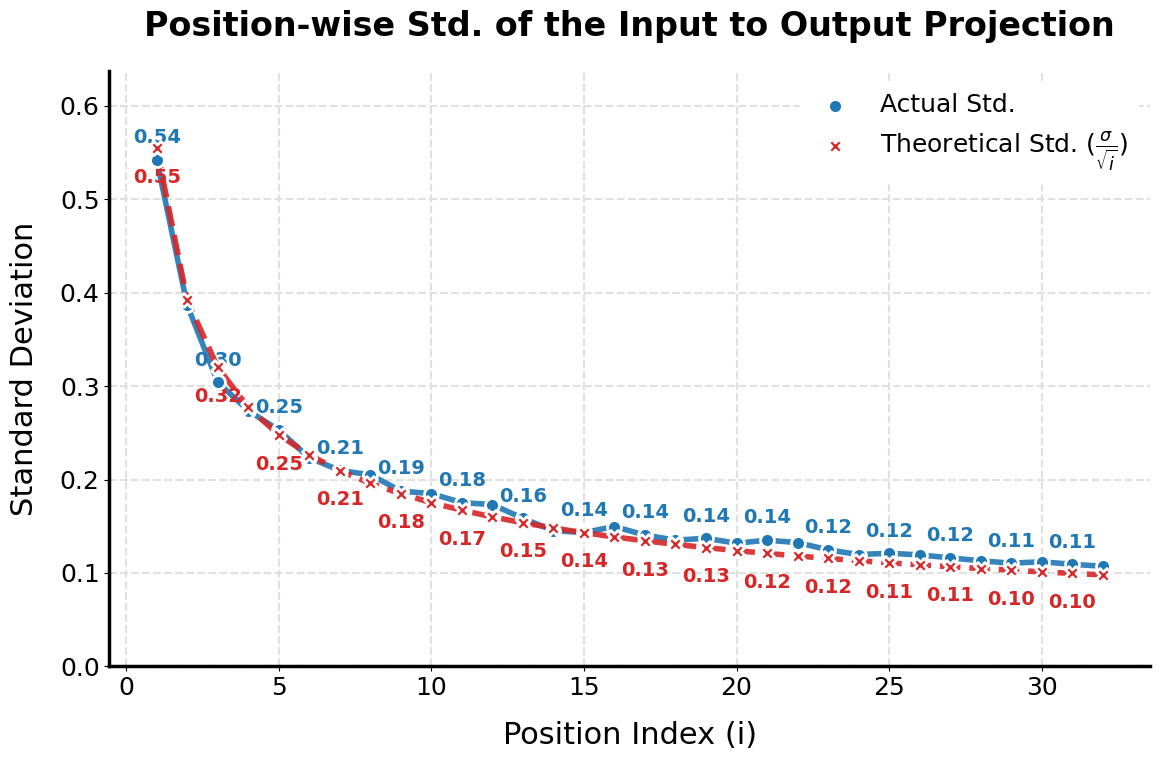}
    \vspace{-10pt}
    \caption{
        Verification of positional variance decay in a randomly initialized self-attention block ($T=32$). The red dashed line represents the empirical standard deviation of the output representation at each token position, while the blue dashed line denotes the theoretical decay curve ($\propto \frac{1}{\sqrt{i}}$) derived from the uniform attention assumption. The near-perfect overlap confirms that the initial token representations degrade in variance as the sequence length increases.
    }
    \label{fig:std_decay}
\end{figure}

To formalize this, we approximate the value vectors $\bm v_j$ at initialization as zero-mean, uncorrelated variables with identical variance $\sigma^2$.
Under the uniform aggregation assumption, the output representation at position $i$, denoted as $\bm{o_i}$, satisfies
\begin{equation} \label{eq:variance_decay}
    \bm{o_i} \approx \frac{1}{i} \sum_{j=1}^{i} \bm{v_j}, \quad \implies \quad \mathrm{Var}(\bm{o_i}) \approx \frac{1}{i^2} \sum_{j=1}^{i} \mathrm{Var}(\bm{v_j}) = \frac{\sigma^2}{i}.
\end{equation}
The variance additivity here stems from the zero cross-position covariance at initialization.
Consequently, the first token ($i=1$) preserves the full initial variance $\sigma^2$, while subsequent tokens ($i > 1$) experience a rapid $1/i$ variance shrinkage.

To validate this phenomenon, we injected Gaussian vectors of length $T=32$ into a randomly initialized self-attention block. We then extracted the output $\bm{o}$ following the value aggregation and calculated the standard deviation of the representation at each position. 
As illustrated in Figure~\ref{fig:std_decay}, the empirical results (red dashed line) nearly perfectly overlap with the theoretical $1/\sqrt{i}$ decay curve (blue dashed line).
 
This inherent peculiarity of the first token is remarkably reminiscent of the \textit{attention sink} phenomenon observed in LLMs~\citep{xiao2023efficient}, where pretrained models assign disproportionately high attention scores to initial tokens (even if semantically irrelevant) to offload probability mass.
Our results suggest that this behavior is not merely learned from data, but is rooted in a fundamental positional bias present at ``birth''. 
This heterogeneity may even effectively compete with explicit positional schemes like RoPE~\citep{su2024roformer}.
In Section~\ref{sec:variance_and_sinks}, we leverage this insight to mitigate the initialization-induced variance discrepancy and establish its causal link to the attention sink.

\section{Persistence of Seed-specific Structural Biases After Training}\label{sec:persistence}

In previous sections, we have investigated randomly initialized transformers and uncovered the underlying mechanism leading to the systematic biases tied to the random initialization. 
A natural question arises: \textit{to what extent do these initial structural properties survive the LLM training process?}
We demonstrate that while a model's specific knowledge evolves through data-driven learning, the underlying structural biases inherited from initialization remain detectable, forming a persistent and idiosyncratic model identity.

\begin{figure}[t]
    \centering
    \includegraphics[width=0.75\textwidth]{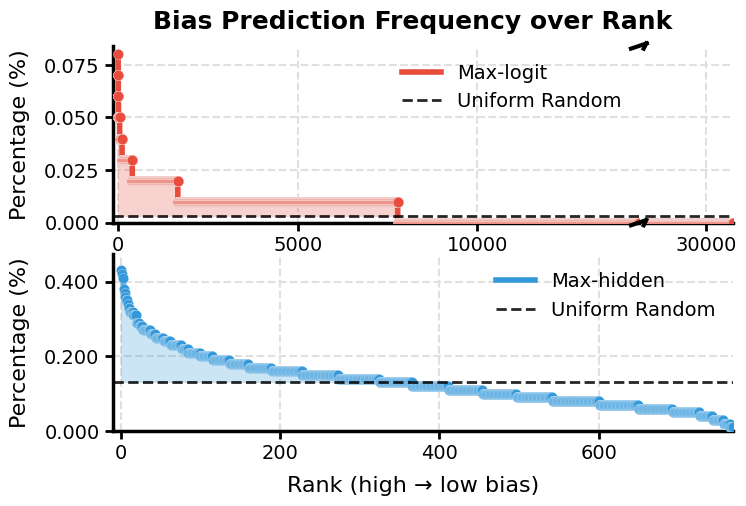}
    \caption{Extreme token preference occurs for both the maximum-logit tokens (top, red) and the maximum-hidden-representation dimensions in the final layer (bottom, blue) across random inputs. The dashed line denotes the expected frequency under a uniform token distribution. The arrows in the top panel indicate a broken x-axis that omits the low-frequency tail ranks.}
    \label{fig:hidden_bias}
    \vspace{-10pt}
\end{figure}

To investigate this persistence with minimal interference from the training objective, we shift our focus from the final next-token predictions to the internal representations of the final hidden layer. 
This one-layer rollback helps mitigate the direct influence of training objectives---since the model is optimized for token prediction rather than hidden representations---and thus provides a cleaner view of the initialization-induced bias. 
This shift in focus is motivated by our earlier findings in \cref{sec:attn_contract}, where we observed that representation contractions also arise in the later hidden layers. We therefore expect that extreme bias patterns should also display in internal representations. In~\cref{fig:hidden_bias}, we plot the frequency with which each token is predicted (upper) and the frequency with which each hidden dimension attains the maximum value at the final layer (lower) over 10,000 random input sequences. The pronounced non-uniform shape in the lower plot indicates that similar extreme bias patterns are also present in the hidden representations of the final layer.
\begin{tcolorbox}[
    enhanced,
    colback=LightGoldenrodYellow!40,
    colframe=black!20,
    arc=2mm, 
    boxrule=0.5pt, 
    boxsep=0mm,
    before skip=5mm,
    after skip=5mm
]
\textbf{Finding 1 (Extended):} \textit{Across diverse random sequences, randomly initialized transformers consistently exhibit an abnormal preference for which hidden dimensions to assign the maximum value in each layer. }
\end{tcolorbox}

We next show that a model trained from a given initialization preserves a \emph{weak but non-negligible correlated bias pattern}. Finding 2.2 reveals an alignment between the model’s predictive preference and the contracted directions reflected in the averaged outputs over random inputs; that is, for some position $j$, the initialized model $f$ exhibits an unusually large expected response $\mathbb{E}_{x \sim \mathcal{D}_{rand}}[f(x)_j]$ compared to a uniform baseline.
We can thus interpret the outputs of a trained model $f'$ through a simple decomposition on an $N\times d_{\text{out}}$ response matrix: for each dimension $j$, 
$$f'(x)_j = b_j(x) + \epsilon_j(x),$$
where $b_j(x)$ represents the initialization-induced bias and $\epsilon_j(x)$ denotes training-specific variation. Therefore, if such bias $b_j(x)$ persists, we would expect to observe, on certain dimensions $j$, that the \textit{output distributions} of two models over random inputs satisfy $\mathrm{Corr}_{x \sim \mathcal{D}_{rand}}\big(f(x)_j,\ f'(x)_j\big) > 0.$

To test this, we focus on the most biased dimensions $\mathcal{M}$, i.e., the top-$m$ ranked dimensions in the contraction direction $\bar{f} \coloneqq \frac{1}{n}\sum_{i=1}^n f(x_i) \in \mathbb{R}^{d_{\text{out}}}$. 
Define 
$$\mathcal{M} = \operatorname*{arg\,max}_{J \subseteq \{1,\dots,d_{\text{out}}\},\, |J|=m} 
\sum_{j \in J} \bar{f}_j,$$
Following the setup of~\cref{tab:integrated_bias_stats}, we generate 10{,}000 random sequences of length 1024. Starting from a randomly initialized model with seed 42, we continuously train this model on the OpenWebText dataset~\citep{Gokaslan2019OpenWeb} for one epoch and evaluate intermediate training checkpoints as target models. For each checkpoint, we compute the correlation between the output distribution at each selected dimension and that of the initialized model. To establish an uncorrelated baseline, we additionally compute the correlation between the initialized model and models trained from a different initialization (seed 123).
We report the mean correlation and standard deviation over the top-$m$ selected dimensions with $m=50$.

\begin{tcolorbox}[
    enhanced,
    colback=LightGoldenrodYellow!40,
    colframe=black!20,
    arc=2mm, 
    boxrule=0.5pt, 
    boxsep=0mm,
    before skip=5mm,
    after skip=5mm
]
\textbf{Finding 7:} \textit{The extreme next-token prediction preference persists during training, which inspires novel LLM fingerprinting methods. }
\end{tcolorbox}

As illustrated in \cref{fig:bias_ckpts}, although the correlation between each checkpoint and the corresponding initialized model is small in absolute value, it is systematically shifted upward relative to the uncorrelated baseline. The two distributions exhibit overlap, but the persistent upward shift supports the presence of initialization-induced bias throughout training. Such persistent bias is therefore exploitable---in~\cref{sec:fingerprint}, we show that these bias profiles can serve as statistically reliable and idiosyncratic model fingerprints.

\begin{figure}[t]
    \centering
        \includegraphics[width=0.78\textwidth]{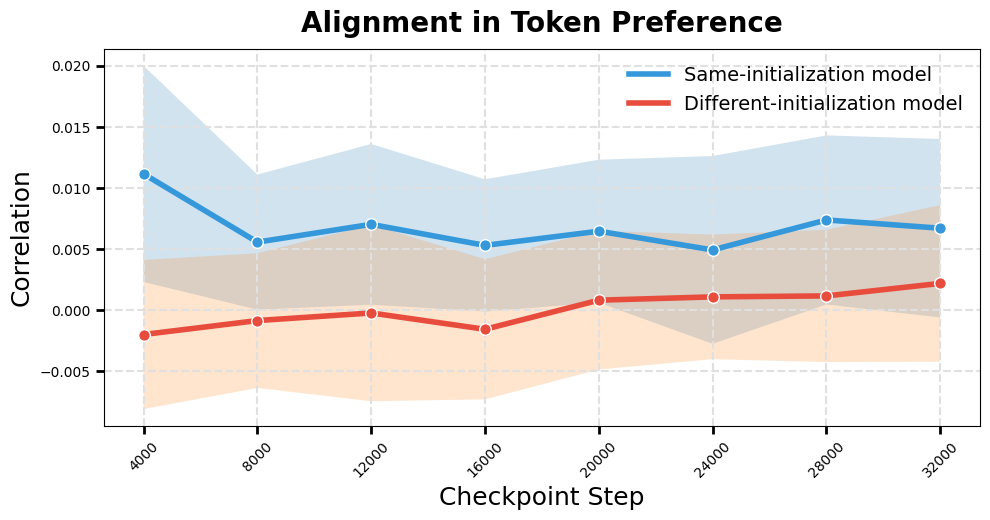}
        \vspace{-5pt}
    \caption{Continuously trained models exhibit weakly correlated bias profiles across training checkpoints, consistently shifted upward relative to the uncorrelated baseline.}
    \label{fig:bias_ckpts}
    \vspace{-10pt}
\end{figure}

\section{Practical Implications}
\label{sec:practical}
\subsection{Seed-Specific Outliers as a Biological-Metaphor LLM Fingerprint}\label{sec:fingerprint}

LLM fingerprinting aims to provide identifiers for trained models, serving as a foundation for model attribution, lineage tracing, and misuse detection.  
However, existing fingerprinting approaches~\citep{pasquini2024llmmap, xu2024instructional, yoon2025intrinsic, zhang2024reef, zeng2024huref, zhang2025matrix, luan2025robust, tsai2025rofl, alhazbi2025llms} primarily rely on semantic behaviors or parameter similarities that emerge only after substantial training, and therefore are usually unavailable at early pretraining stages.
In this section, we show that the observed seed-dependent token preferences can act as {\textit{Biological-Metaphor fingerprints}} for LLMs, \textit{\textbf{SeedPrints}}, that are uniquely determined at models' birth and detectable at \textit{any time} of the subsequent training.

\subsubsection{Algorithm}
\vspace{-5pt}
\cref{fig:bias_ckpts} reveals a key observation: bias profiles between models from the same lineage are weakly correlated, and their correlation scores are distributed above the uncorrelated baselines. However, three challenges remain: (1) the existing bias direction relies on access to the initialization model and its top-$m$ ranked dimensions, which are often unavailable in realistic scenarios; (2) although the same-lineage correlations are distributionally higher, the two distributions overlap, making statistics based solely on mean or standard deviation less powerful; and (3) empirically constructing a reliable uncorrelated baseline involves a non-trivial trade-off between computational cost and estimation accuracy. We next show how our proposed method addresses these three issues.

\paragraph{Resolve (1): Extract identity dimensions between any two models.}
While the top-$m$ ranked dimensions can be directly obtained from the initialized model, such dimensions are unavailable when only two trained models are provided (e.g., an early checkpoint and a late checkpoint). To address this, we consider the \emph{coset} of the two models, with the expectation that if both models inherit bias from the same initialization, their high-preference dimensions will exhibit non-trivial overlap and are more likely to align with a shared bias direction. Conversely, if the two models show little or nearly no overlap, they are likely to be unrelated. We therefore extract dimensions that are jointly prominent in both models and use this intersection as a proxy for the latent initialization-induced \textit{identity dimensions}.

Formally, let $X=\{x_i\}_{i=1}^n$, where each $x_i \in \mathbb{R}^{\ell \times d}$ denotes a random sequence of length $\ell$, obtained by independently sampling $\ell$ random vectors from a $d$-dimensional isotropic Gaussian distribution.
For any model $g$, define the mean output vector $\bar{g} \coloneqq \frac{1}{n}\sum_{i=1}^n g(x_i) \in \mathbb{R}^{d_{\text{out}}},$
where $g$ is either the base model $f$ or the suspect model $f'$, and $d_{\text{out}}$ denotes the output dimensionality (vocabulary size for logits, or hidden size for the final hidden representation).\footnote{Using the final hidden representation instead of logits avoids detokenization noise and is more robust to the rare case that a random sequence appears in the training data.} For each model, we identify its high-preference dimensions as 
$$\mathcal{M}_g = \operatorname*{arg\,max}_{J \subseteq \{1,\dots,d_{\text{out}}\},\, |J|=m} 
\sum_{j \in J} \bar{g}_j.$$
Let $\mathcal{S} \coloneqq \mathcal{M}_f \cap \mathcal{M}_{f'} = \{s_1,\dots,s_{|\mathcal{S}|}\}$
denote the intersection of the two sets. We refer to $\mathcal{S}$ as the \textit{identity dimensions}.

\paragraph{Resolve (2): Hypothesis test on distribution of correlation statistics.} As in~\cref{sec:persistence}, we obtain $|\mathcal{S}|$ empirical Kendall–Tau correlations forming a correlation distribution. We then perform a hypothesis test (e.g., a one-sided $t$-test or the Mann–Whitney $U$-test in~\cref{sec:exp}) to evaluate whether this distribution is significantly higher than a null distribution of no association, constructed from an uncorrelated baseline. The distributional test avoids relying on a fixed summary statistic (e.g., mean or standard deviation), which can be unstable when the two distributions exhibit substantial overlap, and thus provides stronger discrimination in such cases. If the null hypothesis is rejected, we deem $f'$ to be derived from $f$. We declare significance at $p<0.01$; further details are provided in~\cref{alg:main}.

\vspace{-5pt}
\begin{algorithm}[t]
\caption{Distribution Correlation Test on Identity Dimensions}\label{alg:main}
\scalebox{0.85}{
\begin{minipage}{1.15\linewidth}
\begin{algorithmic}[1]
\REQUIRE base model $f$, suspicious model $f'$; random inputs $X=\{x_i\}_{i=1}^n$; fingerprint size $m$; \\
\quad\quad\quad significance level $\alpha$

\leadcomment{$\blacktriangleright$ Step 1: Localize biased dimensions}

\STATE Compute average outputs $\bar f,\ \bar f'$ over $X$
\STATE $\mathcal{M}(f),\ \mathcal{M}(f') \gets$ top-$m$ dimensions of $\bar f,\ \bar f'$
\STATE $\mathcal{S} \gets \mathcal{M}(f)\cap \mathcal{M}(f')$ \hfill (identity dimensions)

\leadcomment{$\blacktriangleright$ Step 2: Form correlation distribution}

\FOR{each $s_j\in \mathcal{S}$}
  \STATE $\tau_j \gets \mathrm{KendallTau}\big(
  \{f(x_i)_{s_j}\}_{i=1}^n,\,
  \{f'(x_i)_{s_j}\}_{i=1}^n
\big)$
\ENDFOR
\STATE $\mathcal{T} \gets \{\tau_j\}_{j=1}^{|T|}$

\leadcomment{$\blacktriangleright$ Step 3: Hypothesis test against null}

\STATE Construct $\mathcal{T}_{\mathrm{null}}$ by applying the same pipeline to two Gaussian matrices $Y^{(1)},Y^{(2)}\sim\mathcal N(0,I)^{N\times d_{\text{out}}}$

\STATE Test $H_0:\mathcal{T}=\mathcal{T}_{\mathrm{null}}$ vs.\ $H_1:\mathcal{T}>\mathcal{T}_{\mathrm{null}}$

\STATE \textbf{Return} {$\mathsf{SameLineage} \gets \mathbf{1}(p\text{-value}<\alpha)$}

\end{algorithmic}
\end{minipage}
}
\end{algorithm}

\paragraph{Resolve (3): Constructing a practical null distribution.}
A final challenge is how to obtain a reliable null baseline without relying on costly empirical comparisons between thousands of independently initialized models. Empirically, the correlation distribution between independently initialized models closely follows a Gaussian centered around zero (see~\cref{appendix:gaussian_null} for more details). Therefore, we construct the null by applying the same dimension-selection and correlation computation pipeline to two independent Gaussian matrices of the same size as the model outputs, as an approximation to unrelated models. This provides a lightweight surrogate for two unrelated models while preserving the dependency structure introduced by selecting identity dimensions.

\subsubsection{Experiments: Birth-to-Lifecycle Biological-Metaphor fingerprinting}\label{sec:exp}

In this section, we demonstrate that our method is a genuine fingerprint: (i) it enables {\textit{birth verification at the seed level}}; and (ii) it remains {\textit{verifiable throughout the full training lifecycle, even for 7B-scale models}}. We test with both the one-sided $t$-test ($t$-test) and Mann–Whitney $U$ test ($U$ test) to demonstrate the robustness to the choice of hypothesis test.

For experiments requiring training from scratch, we use 12-layer, 12-head LLaMA-style models with RoPE~\citep{touvron2023llama, su2021roformer} and Qwen-style models~\citep{team2024qwen2}. We further evaluate fingerprinting in the fine-tuning stage using 7B pretrained variants (see~\cref{tab:fingerprint-models}). Because our random baseline is stochastic, we report $p$-values averaged over 10 independent trials and adopt a significance level of $\alpha=0.01$. Importantly, the absolute magnitude of extremely small $p$-values is not meaningful: once $p$ falls below numerical and sampling noise (e.g., $<10^{-20}$), values like $10^{-260}$ should not be interpreted as stronger evidence than $10^{-20}$—both decisively reject the null. In the main paper, we present results for LLaMA-style models and defer those for Qwen-style models to~\cref{sec:qwen}. The overall conclusions are consistent across the two.

We compare against four state-of-the-art passive fingerprinting methods: Intrinsic Fingerprint~\citep{yoon2025intrinsic}, REEF~\citep{zhang2024reef}, and the two HuRef variants—PCS and ICS~\citep{zeng2024huref}. Additional details are in~\cref{app:more_exp}. Note, in all experiment tables, cell colors indicate lineage: with \colorbox{green!15}{green} denotes models from the same source, and \colorbox{red!15}{red} denotes different sources. 
For example, \colorbox{green!15}{\scalebox{0.8}{$s_{42}^{init}$ vs. $s_{42}^{base}$}} compares a model initialized with seed 42 and its continued-pretrained counterpart, hence green.
By contrast, \colorbox{red!15}{\scalebox{0.8}{$s_{2000}^{init}$ vs. $s_{42}^{base}$}} compares a seed-2000 initialization with a model trained from a seed-42 initialization, hence red.
Additionally, {\color{green!70!black}\checkmark} denotes a correct detection, while ${\color{red!70!black}\times}$ denotes an error.

\input{tables/init_base_compare}

\input{tables/baseline}

\paragraph{Different initialization seeds produce distinct fingerprints.}
Table~\subref{tab:init_model_comparison_alt} reports $p$-values from our correlation tests between pairs of models initialized with different random seeds (42, 123, 1000, and 2000). All $p$-values are consistently $>0.01$, indicating that our method reliably distinguishes models trained from different seeds. This shows that distinct seeds yield distinct fingerprint behaviors, allowing models to be separated ``at birth.''

\paragraph{Training preserves the initialization fingerprint.}
Table~\subref{tab:init_base_comparison} compares each initialization model $s^{init}$ with its descendant $s^{base}$ trained on the OpenWebText dataset~\citep{Gokaslan2019OpenWeb} ($\approx$10B tokens). Across all seed–model pairs, $p$-values are consistently $<0.01$, indicating that their bias profiles remain strongly correlated and thus share a common lineage. In short, the trained model inherits the same fingerprint as its initialization. We also evaluate \textit{{baseline methods}} (results in Appendix~\ref{app:baseline-cross-seed}); without exception, they \textit{{fail to distinguish across seeds}}, which in turn suggests their separability stems from training-induced artifacts rather than properties inherent to a specific model instantiation.

\paragraph{Identical data and order do not make fingerprints converge.} 
In Table~\subref{tab:cross_init_base_comparison}, all four ``suspicious'' models $s^{base}_i$ for $i\in \{42,123,100,2000\}$ are trained on \emph{exactly the same corpus (OpenWebText) and in the same data order} (we fix the training seed to lock the data order); the only difference lies in their initialization seeds $i$. Across all cross-seed pairs, $p$-values remain consistently $>0.01$, in sharp contrast to the near-zero values in Table~\subref{tab:init_base_comparison}. That is, fingerprints remain seed-specific even under identical data and curriculum.

\paragraph{Continual training on diverse datasets does not confound the fingerprint.}
From a copyright perspective, the true weakness of existing fingerprinting methods is their fragility to distribution shifts, which prevents reliable lineage attribution under continual training. In~\cref{tab:fingerprint-continual}, we continue training a base model (seed 1000, pretrained on OpenWebText) on two very different datasets: TinyStories~\citep{eldan2023tinystories} (synthetic children’s stories) and The Stack~\citep{Kocetkov2022TheStack} (permissively licensed GitHub code). We compare (i) true descendants trained from the base, versus (ii) \emph{distractors} derived from a different base model (initialized with seed 123 and trained with a different data order on OpenWebText), then continued training on the \emph{same} corpus. The question is whether attribution methods can identify which descendant truly shares lineage with the base.

We find that \textit{prior baselines all fail under the code setting} (The Stack), misclassifying true descendants as distractors. This indicates that they largely track domain similarity rather than lineage identity: TinyStories is closer in distribution to the pretraining corpus (OpenWebText), while The Stack diverges sharply; such a large distribution shift can easily bypass detection. In contrast, our method correctly attributes lineage across both corpora. Hence, our fingerprint is not a proxy for data distribution: it survives substantial domain shift and persists beyond the initial pretraining stage.

\input{tables/real_model}

\paragraph{From early training to finetuning}
We further compare our method with existing baselines under standard evaluations on pretrained models. In particular, we test suspect models fine-tuned from Llama-2-7b (base model) with data volumes ranging from 5 million to 500 billion tokens. The suspects include diverse downstream variants such as Llama-2-finance-7b~\citep{cxllin2023llama2fin}, Vicuna-1.5-7b~\citep{vicuna2023}, WizardMath-7b~\citep{luo2023wizardmath}, Chinese-LLaMA-2-7b~\citep{chen2023meditron70b}, and Code-Llama-7b~\citep{codellama}. Their fine-tuning data volumes are 5M, 370M, 1.8B, 13B, and 500B tokens, respectively. As shown in \cref{tab:fingerprint-models}, our method consistently maintains $p<0.01$ across all settings.

\subsection{Connection Between Positional Variance Discrepancy and Attention Sinks}
\label{sec:variance_and_sinks}

Our analysis of the self-attention mechanism identifies a fundamental \textit{positional variance disparity} that emerges early in the sequence.
As formalized in Equation~\ref{eq:variance_decay}, the ``running average'' nature of causal attention induces a systematic decay: the variance of a token's representation scales inversely with its position index. 
Consequently, the initial token—exempt from this averaging effect—maintains a significantly higher variance than subsequent tokens, manifesting as a structural and statistical outlier from the onset of training.

This uniqueness of the initial token is a likely precursor to the \textit{attention sink} phenomenon, where the first token disproportionately monopolizes attention scores regardless of semantic context~\citep{gu2024attention}.
This behavior has been characterized as a ``massive activation'' phenomenon \citep{sun2024massive}, wherein initial tokens develop exceptionally large magnitudes to serve as a numerical bias that absorbs residual attention probability. 
Historically, this has been interpreted as a ``null attention'' \citep{vig2019analyzing} or ``no-op'' mechanism \citep{clark2019does}, and its preservation is known to be critical for the stability of sliding-window inference \citep{xiao2023efficient}.

While recent studies have proposed various mechanistic drivers, including spectral subspaces \citep{cancedda2024spectral}, positional ``waiver'' distributions \citep{yan2024unveiling}, and high-norm bands in Query/Key projections \citep{barbero2024round}, these works largely focus on the properties of fully trained models.
In contrast, we hypothesize that the initial positional variance disparity acts as a \textit{statistical anchor} at initialization. This inherent singularity biases the optimization landscape, incentivizing the model to ``latch'' onto the first token as a stable reference point, thereby inducing the emergence of the attention sink.

\subsubsection{Intervention of Attention Sink}
\label{sec:quantitative_verification}

To investigate this causal link, we design an explicit intervention to rectify this inherent variance decay.
Specifically, we apply a \textit{variance calibration} operation to the aggregated representation $\mathbf{o}_i$ (where $\mathbf{o}_i = \sum_{j=1}^i A_{i,j} \mathbf{v}_j$) before it enters the subsequent projection layers. 
We implement two strategies to enforce variance consistency across positions:

\begin{itemize}
    \item \textbf{Positional Amplification:} We amplify the aggregated output by a factor of $\sqrt{i}$ to directly counteract the theoretical $\frac{1}{\sqrt{i}}$ standard deviation decay:
    \begin{equation*}
        \tilde{\mathbf{o}}_i = \sqrt{i} \cdot \mathbf{o}_i = \sqrt{i} \sum_{j=1}^i A_{i,j} \mathbf{v}_j.
    \end{equation*}
    \item \textbf{Positional Attenuation:} To ensure numerical stability while equalizing variance, we propose a normalized variant that scales the output by $\sqrt{\frac{i}{T}}$ (where $T$ denotes the maximum context length):
    \begin{equation*}
        \tilde{\mathbf{o}}_i = \sqrt{\frac{i}{T}} \cdot \mathbf{o}_i = \sqrt{\frac{i}{T}} \sum_{j=1}^i A_{i,j} \mathbf{v}_j.
    \end{equation*}
\end{itemize}
Both methods effectively neutralize the positional bias at initialization, ensuring that representations across all sequence indices contribute with comparable variance to the subsequent layers.

To empirically validate these strategies, we conducted controlled pre-training experiments from scratch, where we adopt the \textit{Nano Llama-2 architecture}~\citep{touvron2023llama} ($d_{model}=768$, 12 layers, 12 heads, 2048 context window, RoPE). 
The models were trained on the {OpenWebText} dataset~\citep{Gokaslan2019OpenWeb} for 200$k$ steps using the AdamW optimizer (full details in Appendix~\ref{app:exp_setup}).
To quantify the attention sink phenomenon, we adopt the threshold-based metric proposed by \citep{gu2024attention}. 
Let $\alpha_{l,h}^1$ denote the importance score of the first token (index 1) in the $h$-th head of the $l$-th layer, calculated as the average attention weight it receives:
\begin{equation*}
    \alpha_{l,h}^1 = \frac{1}{T} \sum_{i=1}^{T} A_{l,h}^{i,1},
\end{equation*}
where a head is classified as a ``sink head'' if $\alpha_{l,h}^1 > \epsilon$. The model-wide \textit{sink rate} is defined as the proportion of such heads:
\begin{equation*}
    \text{Sink}_{\epsilon} = \frac{1}{L \cdot H} \sum_{l=1}^{L} \sum_{h=1}^{H} \mathbb{I}(\alpha_{l,h}^1 > \epsilon).
\end{equation*}
We set $\epsilon = 0.25$ and evaluate this metric on real-world text from the {WikiText-2}~\citep{merity2016pointer} test set, averaging the Sink Rate over 100 randomly selected sequences across lengths ranging from 32 to 512 tokens.

\begin{tcolorbox}[
    enhanced,
    colback=LightGoldenrodYellow!40,
    colframe=black!20,
    arc=2mm, 
    boxrule=0.5pt, 
    boxsep=0mm,
    before skip=5mm,
    after skip=5mm
]
\textbf{Finding 7:} \textit{The initial representation contraction is causally related to the attention-sink appeared after pre-training.}
\end{tcolorbox}  

\begin{figure}[t]
    \centering
    \includegraphics[width=0.80\linewidth]{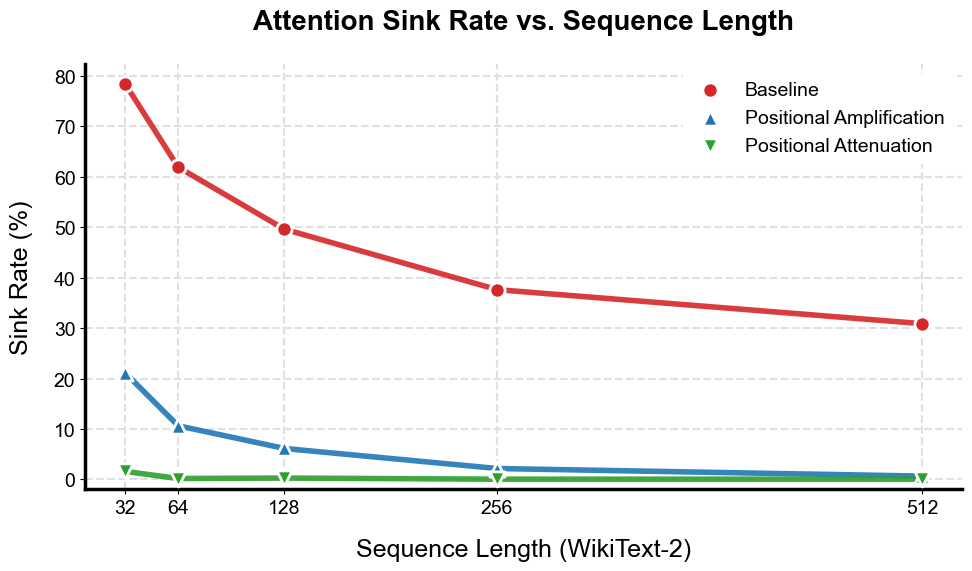}
    \caption{
        \textbf{Quantitative Comparison of Attention Sink Rate on WikiText-2.}
        We report the average Sink Rate across varying sequence lengths (N=100 samples per point). While the Baseline model shows a persistently high sink rate, both Positional Amplification and Attenuation strategies effectively eliminate the phenomenon across all lengths on real text inputs.
    }
    \label{fig:sink_rate_comparison} 
\end{figure}

As illustrated in Figure~\ref{fig:sink_rate_comparison}, the {Baseline} model consistently exhibits a significant sink rate across all sequence lengths, confirming the prevalence of the attention sink phenomenon in the pretraining stage. 
In stark contrast, the model initialized with the \textit{Positional Attenuation} strategy maintains a \textit{near-zero} sink rate regardless of sequence length. 
Meanwhile, the \textit{Positional Amplification} strategy also demonstrates effectiveness, yielding a {significantly lower} sink rate compared to the baseline, although it does not eliminate the phenomenon as completely as the attenuation approach.
This robust quantitative evidence demonstrates that correcting the initial variance disparity effectively mitigates the formation of attention sinks, supporting our hypothesis that they are \textit{learned artifacts} driven by statistical anomalies.

\subsubsection{Impact on LLM Pretraining}
\label{sec:impact_on_ppl}

We further verify that our structural interventions do not compromise the model's pretraining process. 
We compare the validation perplexity (PPL) of the fully converged {Nano Llama-2} models trained with the two standard deviation compensation strategies against the Baseline on two standard benchmarks: {WikiText-2}~\citep{merity2016pointer} and {C4}~\citep{raffel2020exploring}. 
To ensure a rigorous comparison, we conduct the evaluation on {256} sequences, each with a fixed context length of {2048} tokens.

\begin{table}[t]
\small
\centering
\caption{\small
    Analysis of language modeling performance under positional compensation (Nano LLaMA-2), measured by perplexity (PPL; lower is better) on the WikiText-2 and C4 validation sets. Both compensation strategies achieve competitive performance, with Positional Attenuation yielding lower PPL on WikiText-2 and Positional Amplification yielding lower PPL on C4.
}

\setlength{\tabcolsep}{25pt}
\begin{tabular}{l|cc}
    \toprule
    \textbf{Model} & \textbf{WikiText-2 (PPL) $\downarrow$} & \textbf{C4 (PPL) $\downarrow$} \\
    \midrule
    Baseline & 21.33 & 30.36 \\
    Positional Amplification & 21.95 & \textbf{28.70} \\
    Positional Attenuation & \textbf{21.02} & 30.48 \\
    \bottomrule
\end{tabular}
\label{tab:performance_comparison}
\end{table}

As presented in Table~\ref{tab:performance_comparison}, our methods maintain robust performance. 
The \textit{Positional Attenuation} strategy achieves the best performance on WikiText-2 (21.02 vs. 21.33), while the \textit{Positional Amplification} strategy demonstrates superior generalization on the larger C4 dataset (28.70 vs. 30.36).
Overall, both strategies achieve performance comparable to or better than the Baseline, confirming that eliminating attention sinks via variance alignment resolves structural pathologies without sacrificing the model's ability to capture semantic dependencies.

\section{Discussion}\label{sec:discussion}

The investigation presented in this study reveals that transformers are not ``blank slates'' at birth; rather, they are born with an innate structural identity determined by their random initialization seed. 
This identity manifests as extreme next-token preferences. Our mechanistic dissection reveals that this phenomenon is driven by the interaction of asymmetric nonlinear activations in MLP sublayers, which induce inter-sequence representation concentration, and self-attention sublayers, which amplify this effect through intra-sequence aggregation. By simplifying these components into ReLU and uniform attention approximations, we provided a rigorous theoretical framework that accurately predicts the rate of representation contraction observed in our empirical trials.

Beyond providing mechanistic insight, our findings have significant practical implications for model security and architectural design. 
On one hand, we have shown that these initialization-induced biases are persistent throughout the training process. This persistence enables the development of SeedPrint, a novel fingerprinting method that can reliably distinguish models sharing identical architectures and training data but differing only in their initial random seeds.
Unlike existing passive fingerprinting techniques that often track domain similarity or training artifacts, SeedPrint remains robust under substantial distribution shifts and extensive fine-tuning, making it a powerful tool for lineage tracing and copyright auditing.
On the other hand, our analysis identifies a causal link between the structural biases of the initialization regime and the widely observed attention-sink phenomenon, where the value aggregation process of causal attention induces a positional variance decay that makes the first token a statistical outlier. 
By introducing architectural interventions such as Positional Amplification or Attenuation to equalize variance, we demonstrated that attention sinks can be effectively mitigated or eliminated without compromising the model's capture of semantic dependencies.

These discoveries hold the potential for a profound shift in the LLM research paradigm—moving from a focus on ``what the model learns'' to an understanding of ``what the model is born with''. 
For years, the community has treated the transformer as a neutral container for data, yet our work suggests that the architecture's birth state mechanistically roots behaviors previously thought to be learned strategies. 
By demystifying phenomena like attention sinks as architectural byproducts rather than data-driven artifacts, we provide practitioners with a direct pathway to control model stability and identity through principled structural adjustments. This perspective fosters a more transparent and accountable ecosystem for large-scale model development, where model lineage and copyright can be verified from the first epoch of training.

However, this work is not without limitations. Our current analysis primarily focuses on the structural state at initialization and its empirical persistence, without directly modeling the complex dynamics of the training process itself. 
While we have shown that initial biases remain detectable after training, a more rigorous theoretical treatment is needed to understand how these structural tendencies interact with gradient-based optimization and the acquisition of semantic knowledge. 
Furthermore, while we evaluated models up to the 7B parameter scale, the behavior of these biases in ultra-large-scale regimes remains an area for further empirical verification. 
Future research should explore how to intentionally manipulate the initial representational geometry to improve model alignment or enable more efficient pre-training, ultimately demystifying the transition from a structured initialization to a fully capable language model.

\vskip 0.2in

\clearpage
\bibliography{reference}

\clearpage
\appendix
\clearpage
\noindent\textbf{\large Appendix Contents}\par\vspace{15pt}

\startcontents[appendix]
\setcounter{tocdepth}{3}
\printcontents[appendix]{l}{1}{}
\clearpage

\section{Technical Details}
\subsection{Proof of Proposition \ref{lem:asymmetric_collapse}}\label{sec:proof_prop2}

\begin{proof}
We analyze the evolution of the expected cosine similarity through the layers by leveraging the infinite-width limit, where the pre-activation distributions converge to a Gaussian process.

\textbf{The ReLU Recurrence Relation}
Let $\bar{\rho}_{l-1}$ denote the expected cosine similarity between the representations of two inputs at layer $l-1$. In the limit of infinite width, the pre-activations at layer $l$, denoted by $U^{(l)}_1$ and $U^{(l)}_2$, follow a zero-mean bivariate Gaussian distribution with correlation $\bar{\rho}_{l-1}$. For the ReLU activation $\phi(u) = \max(0, u)$, the expected product of the activations (the unnormalized covariance) is given by the arc-cosine kernel of degree 1:
\begin{equation*}
    \mathbb{E}[\phi(U^{(l)}_1)\phi(U^{(l)}_2)] = \frac{\sigma_l^2}{2\pi} \left( \sin \theta_{l-1} + (\pi - \theta_{l-1}) \cos \theta_{l-1} \right),
\end{equation*}
where $\sigma_l^2$ is the pre-activation variance and $\theta_{l-1} = \arccos(\bar\rho_{l-1})$ is the angle between the pre-activation vectors. The variance at the output is obtained by setting $\theta_{l-1}=0$ (perfect correlation):
\begin{equation*}
    \mathbb{E}[\phi(U^{(l)}_1)^2] = \frac{\sigma_l^2}{2\pi} (0 + \pi \cdot 1) = \frac{\sigma_l^2}{2}.
\end{equation*}
The expected cosine similarity at layer $l$ is the ratio of the covariance to the variance:
\begin{equation*}
    \bar\rho_l = \frac{\mathbb{E}[\phi(U^{(l)}_1)\phi(U^{(l)}_2)]}{\mathbb{E}[\phi(U^{(l)}_1)^2]} = \frac{1}{\pi} \left( \sin \theta_{l-1} + (\pi - \theta_{l-1}) \cos \theta_{l-1} \right).
\end{equation*}
By substituting $\sin(\arccos(\rho)) = \sqrt{1 - \rho^2}$ and $\cos(\arccos(\rho)) = \rho$, we obtain the recurrence map $g(\rho)$:
\begin{equation} \label{eq:relu_recurrence}
    \bar\rho_l = \frac{1}{\pi} \left( \sqrt{1 - \bar\rho_{l-1}^2} + (\pi - \arccos(\bar\rho_{l-1})) \bar\rho_{l-1} \right).
\end{equation}

\textbf{Base case ($l=1$):}
For $l=1$, the inputs $\bm X_1$ and $\bm X_2$ are independent isotropic Gaussian vectors, implying $\bar\rho_0 = 0$. Substituting $\bar\rho_0 = 0$ into Equation \eqref{eq:relu_recurrence}:
\begin{equation*}
    \bar\rho_1 = \frac{1}{\pi} \left( \sqrt{1 - 0} + \left(\pi - \frac{\pi}{2}\right) \cdot 0 \right) = \frac{1}{\pi}.
\end{equation*}

\textbf{Monotonicity:} Notice that the function $g(\rho)$ is continuous and differentiable on $[0, 1]$. Since $g(0) > 0$ and $g'(1) = 1$, it can be shown that $g(\rho) > \rho$ for all $\rho \in [0, 1)$. Thus, the similarity is monotonically increasing with $l$.

\textbf{Limiting case ($l\to\infty$):} Since the sequence $\bar{\rho}_l$ is bounded above by 1 and is monotonically increasing, it must converge to a fixed point. We examine $\bar{\rho} = g(\bar{\rho})$. The only stable fixed point in the interval $[0, 1]$ is $\rho^* = 1$. Therefore, $\lim_{l \to \infty} \bar{\rho}_l^{\text{ReLU}} = 1$.

\textbf{The tanh case}
Consider $\phi(u) = \tanh(u)$. Since the inputs $\bm X_1, \bm X_2$ are independent zero-mean Gaussians ($\bar\rho_0 = 0$), the pre-activations $U^{(1)}_1$ and $U^{(1)}_2$ are independent. Let $f_{odd}$ be any odd function. Since the marginal distributions of $U$ are symmetric about zero:
\begin{equation*}
    \mathbb{E}[f_{odd}(U)] = 0 \implies \mathbb{E}[f_{odd}(U_1)f_{odd}(U_2)] = \mathbb{E}[f_{odd}(U_1)]\mathbb{E}[f_{odd}(U_2)] = 0.
\end{equation*}
For $\phi = \tanh$, this implies the output correlation $\bar\rho_1^{tanh} = 0$. By induction, if $\bar\rho_{l-1} = 0$, the pre-activations at layer $l$ remain independent, yielding $\bar\rho_l = 0$ for all $l \ge 1$.
\end{proof}

\subsection{Proof of Proposition \ref{lem:value_aggregation_amplifier}}\label{sec:proof_prop4}

\begin{proof}
We analyze the cosine similarity between the representations of two independent sequences, denoted as $\bm X^{(1)}$ and $\bm X^{(2)}$, passed through the two different architectures. Let the sequence length be $T$. We assume the infinite-width limit where the pre-activations follow a Gaussian Process.

\textbf{Correlation after the First MLP Layer:}
Let $\bm x$ and $\bm x'$ be any two independent input tokens (either from the same sequence or different sequences). From Proposition \ref{lem:asymmetric_collapse}, the expected cosine similarity between their representations after the first ReLU MLP layer, $\bm h = \mathrm{MLP}_0(\bm x)$ and $\bm h' = \mathrm{MLP}_0(\bm x')$, is
$\bar\rho_{1}^{\mathrm{ReLU}} = \frac{1}{\pi}.$

Since the inputs $\bm X^{(1)}_i$ and $\bm X^{(2)}_j$ are all mutually independent standard normal vectors, the hidden representations $\bm h^{(1)}_i$ and $\bm h^{(2)}_j$ share the same pairwise correlation $\rho_1$ for all $i, j$ (and for distinct indices within the same sequence). For the normalized representations, we have:
\begin{equation*}
    \mathbb{E}[\bm h_i^\top \bm h_j] = \bar\rho_{1}^{\mathrm{ReLU}}  = \frac{1}{\pi} \quad \text{for } i \neq j, \quad \text{and} \quad \mathbb{E}[\|\bm h_i\|^2] = 1.
\end{equation*}

\textbf{Case I: MLP$_0$-MLP$_0$}
This architecture applies a second MLP layer element-wise. The inter-sequence similarity is defined as the expected cosine similarity between a token processed by the second layer from sequence 1 and sequence 2. This is equivalent to the propagation of correlation through two ReLU layers starting from zero correlation.
Using the recurrence relation from Proposition \ref{lem:asymmetric_collapse} with $\bar\rho_{1} = 1/\pi$:
\begin{align*}
    \bar\rho_{2}^{ReLU} &= \frac{1}{\pi} \left( \sqrt{1 - \bar\rho_1^2} + (\pi - \arccos(\bar\rho_1))\bar\rho_1 \right) \\
    &= \frac{1}{\pi} \left( \sqrt{1 - \frac{1}{\pi^2}} + \left(\pi - \arccos\left(\frac{1}{\pi}\right)\right)\frac{1}{\pi} \right).
\end{align*}
Substituting values ($\pi \approx 3.14159$, $1/\pi \approx 0.3183$):
\begin{equation*}
    \bar\rho_{2}^{ReLU} \approx \frac{1}{3.1416} \left( 0.948 + (1.895)(0.318) \right) \approx 0.493.
\end{equation*}
This confirms the value is approximately $0.49$.

\textbf{Case II: MLP$_0$-Attn$_0$}
The simplified attention aggregates the values into a single vector (conceptually the sequence embedding). Let $\bm y^{(1)}$ and $\bm y^{(2)}$ be the outputs for the two sequences:
\begin{equation*}
    \bm y^{(1)} = \frac{1}{T} \sum_{i=1}^T \bm h^{(1)}_i, \quad \bm y^{(2)} = \frac{1}{T} \sum_{j=1}^T \bm h^{(2)}_j.
\end{equation*}
We calculate the expected inter-sequence pairwise cosine similarity $\bar\rho = \frac{\mathbb{E}[\bm y^{(1) \top} \bm y^{(2)}]}{\sqrt{\mathbb{E}[\|\bm y^{(1)}\|^2]\mathbb{E}[\|\bm y^{(2)}\|^2]}}$.

\textbf{For the Numerator (cross-correlation):}
Since all pairs across sequences are distinct, $\mathbb{E}[\bm h^{(1)}_i{}^\top \bm h^{(2)}_j] = \bar\rho_1$.
\begin{equation*}
    \mathbb{E}[\bm y^{(1) \top} \bm y^{(2)}] = \frac{1}{T^2} \sum_{i=1}^T \sum_{j=1}^T \mathbb{E}[\bm h^{(1)}_i{}^\top \bm h^{(2)}_j] = \frac{1}{T^2} (T^2 \bar\rho_1) = \bar\rho_1.
\end{equation*}

\textbf{For the denominator (variance):}
For a single sequence, the variance of the mean of $T$ variables with uniform pairwise correlation $\bar\rho_1$ and unit variance is:
\begin{equation*}
    \mathbb{E}[\|\bm y^{(1)}\|^2] = \text{Var}\left(\frac{1}{T}\sum \bm h_i\right) = \frac{1}{T^2} \left( \sum_{i} \text{Var}(\bm h_i) + \sum_{i \neq j} \text{Cov}(\bm h_i, \bm h_j) \right).
\end{equation*}
\begin{equation*}
    \mathbb{E}[\|\bm y^{(1)}\|^2] = \frac{1}{T^2} \left( T \cdot 1 + T(T-1)\bar\rho_1 \right) = \frac{1 + (T-1)\bar\rho_1}{T} = \bar\rho_1 + \frac{1-\bar\rho_1}{T}.
\end{equation*}

\textbf{Resulting Similarity:}
\begin{equation*}
    \bar\rho = \frac{\bar\rho_1}{\bar\rho_1 + \frac{1-\bar\rho_1}{T}} = \frac{T \bar\rho_1}{T \bar\rho_1 + 1 - \bar\rho_1}.
\end{equation*}
Substituting $\bar\rho_1 = 1/\pi$:
\begin{equation*}
    \bar\rho = \frac{T (1/\pi)}{T(1/\pi) + 1 - 1/\pi} = \frac{T/\pi}{(T + \pi - 1)/\pi} = \frac{T}{T + \pi - 1}.
\end{equation*}
As $T \to \infty$, this term approaches 1. This demonstrates that while the MLP layer only moderately correlates the sequences ($\approx 0.49$), the attention mechanism acts as an amplifier, driving the correlation to 1 by suppressing the independent noise components relative to the shared bias induced by the ReLU.
\end{proof}

\subsection{Proof of Proposition \ref{lem:intra_contraction}}\label{sec:proof_prop5}

\begin{proof}
We derive the expression for the mean intra-sequence pairwise cosine similarity $\bar{\rho}^\prime(T, L)$ by decomposing the problem into a finite-size correction term and an asymptotic integral limit.

\textbf{Decomposition into Diagonal and Off-Diagonal Terms}
Let $\bm Z \in \mathbb{R}^{T \times d}$ denote the sequence representation after $L$ layers. The mean intra-sequence pairwise cosine similarity is defined as:
\begin{equation*}
    \bar{\rho}^\prime(T, L) = \frac{1}{T^2} \sum_{i=1}^T \sum_{j=1}^T \mathbb{E}\left[\rho(\bm z_i, \bm z_j)\right].
\end{equation*}
The summation consists of $T$ diagonal terms where $i=j$ (with $\rho=1$) and $T(T-1)$ off-diagonal terms ($i \neq j$). Let $\bar{\rho}^\prime(L) \equiv \lim_{T \to \infty} \bar{\rho}^\prime(T, L)$ denote the expected similarity between any two distinct tokens in the continuous limit. Assuming the sequence is exchangeable or stationary in the limit, the expectation for off-diagonal terms converges to $\bar{\rho}^\prime(L)$. Thus:
\begin{align*}
    \bar{\rho}^\prime(T, L) &= \frac{1}{T^2} \left[ T \cdot 1 + T(T-1) \bar{\rho}^\prime(L) \right] \\
    &= \frac{1}{T} + \frac{T-1}{T} \bar{\rho}^\prime(L) \\
    &= \bar{\rho}^\prime(L) + \frac{1}{T} \left( 1 - \bar{\rho}^\prime(L) \right).
\end{align*}

\textbf{ Derivation of the Infinite Limit $\bar{\rho}^\prime(L)$}
In the limit $T \to \infty$, the discrete cumulative averaging operation at layer $k$, $x^{(k)}_t = \frac{1}{t} \sum_{s=1}^t x^{(k-1)}_s$, converges to the integral operator $Tf(t) = \frac{1}{t} \int_0^t f(s) ds$.
For an input of standard Brownian motion (white noise), applying this operator $N = L-1$ times yields a Gaussian process with covariance kernel $K(s, t)$. For ordered positions $s < t$, the correlation depends only on the ratio $r = s/t$:
\begin{equation*}
    \rho_L(r) = \sqrt{r} \sum_{k=0}^{L-2} C_{N, k} \left( \ln \frac{1}{r} \right)^k,
\end{equation*}
where $C_{N, k}$ are coefficients derived from the Gamma distribution integrals associated with the iterated kernels.
The mean similarity is the expectation over all pairs $(s, t)$. For a uniform distribution on the triangle $0 \le s \le t \le 1$, the probability density function for the ratio $r$ is $f(r) = 2r$.
Integrating the correlation over this domain:
\begin{equation*}
    \bar{\rho}^\prime(L) = \int_0^1 \rho_L(r) \cdot 2r \, dr.
\end{equation*}
Solving this integral using the substitution $u = \ln(1/r)$ transforms the terms into Gamma functions $\Gamma(k)$. The exact closed-form solution is given by:
\begin{equation*}
    \bar{\rho}^\prime(L) = \frac{(L-2)!}{(2L-4)!} \sum_{k=0}^{L-2} \frac{(L-2+k)!}{k!} \left(\frac{2}{3}\right)^{L-1-k}.
\end{equation*}
This formula captures the precise rate at which the representation contracts toward the global mean as the network depth $L$ increases.
\end{proof}

\section{Experimental Details}

\subsection{Initialization Details}
\label{sec:appendix_init_details}
We initialize the embedding layer and all standard linear layers using a normal distribution $\mathcal{N}(0, 0.02)$. However, for the residual projection layers—specifically the attention output projection ($\bm{W}_O$) and the MLP down-projection ($\bm{W}_\text{down}$)—we adopt the scaled initialization scheme from GPT-2~\citep{radford2019language}. For these layers, weights are initialized with a standard deviation of $\frac{0.02}{\sqrt{2 L}}$, where $L$ denotes the total number of transformer layers. This scaling factor mitigates the accumulation of variance along the residual path, thereby ensuring optimization stability in deep transformer architectures.

\subsection{Consistent Next-Token Preference Protocol}
\label{sec:appendix_consistent_details}
To rigorously evaluate intrinsic model biases, all experiments adhere to the following protocol regarding model configuration, input generation, and statistical evaluation.

\paragraph{Model and Weighting Setup}
To ensure the comparability of token ID preferences across different random initializations, we fix the weights of the embedding layer and the LM head across trials for both the \textit{RoPE-enhanced GPT-2 (with weight tying)} and \textit{LLaMA-2 (without weight tying)}. By enforcing identical initialization for these token-interaction layers—regardless of this architectural discrepancy—we isolate the impact of the internal transformer block dynamics on token preference.

\paragraph{Input Sequences}
We evaluate the models using a fixed set of $N$ randomly generated token sequences. By utilizing unstructured random inputs devoid of semantic meaning, we ensure that any observed output preference arises solely from the model's inductive biases (i.e., architecture and initialization) rather than input context.

\paragraph{Evaluation Metrics}
We quantify prediction bias using two primary metrics:
\begin{itemize}
    \item \textbf{Top-1 Token ID:} For every input sequence, we identify the token with the maximum logit ($\arg\max$). The \textit{Top-1 Token ID} is defined as the single token ID that appears most frequently as the predicted next token across the aggregate of all $N$ sequences.
    
    \item \textbf{Statistical Significance ($p$-value):} To validate that the dominance of the most frequent token is not a result of random chance, we perform a hypothesis test with a \textit{Bonferroni correction}. Let $k_{\text{max}}$ be the observed count of the Top-1 token. Under the null hypothesis of a uniform distribution ($p_0 = \frac{1}{V}$), the nominal probability is calculated via the tail of the Binomial distribution:
    \[
        p_{\text{nominal}} = P(X \ge k_{\text{max}}) = \sum_{x=k_{\text{max}}}^{N} \binom{N}{x} p_0^x (1-p_0)^{N-x}, \quad \text{where } X \sim \text{Binomial}(N, p_0).
    \]
    To correct for the multiple comparisons problem inherent in checking a vocabulary of size $V$, the rigorous $p$-value is:
    \[
        p\text{-value} = \min(1.0, \quad V \times p_{\text{nominal}}).
    \]
    A small $p$-value (e.g., $<0.05$) indicates that the observed preference is statistically significant.
\end{itemize}

\subsection{LLM Fingerprinting}

\paragraph{Implementation Details and Licenses}
We train all models with the Hugging Face transformers Trainer~\citep{wolf-etal-2020-transformers}, using Accelerate~\citep{accelerate} for distributed runs. All open-source models are loaded from their official Hugging Face releases and used under their original licenses: Llama models under the Meta Llama Community License, and other models under Apache-2.0. All datasets are downloaded via the Hugging Face Datasets library (the library is Apache-2.0); dataset content follows each dataset’s stated license.

\paragraph{Correlation Metric}
We measure the alignment between two rankings using \textit{Kendall’s Tau ($\tau$)} correlation coefficient. This metric is over the intersection of the top-$m$ biased dimensions from both models to focus on the most salient features.

Let $C$ be the number of concordant pairs and $D$ be the number of discordant pairs. Let $n$ be the total number of tokens in the intersection. Kendall’s $\tau$ is defined as the difference between the proportion of concordant and discordant pairs relative to the total number of pairs:
\[
\tau = \frac{C - D}{\frac{1}{2}n(n-1)}.
\]
A value close to $1$ indicates that the tokens favored by the model's behavior are ordered similarly to those favored by the geometric contraction mechanism.

\subsubsection{Validation of the Gaussian Null Distribution} \label{appendix:gaussian_null}
Our hypothesis test evaluates whether two models share lineage by comparing the distribution of their correlation
statistics over random inputs against a null distribution. The null distribution aims to model the correlation distribution between two \emph{independent models when evaluated on random inputs}.

\paragraph{Empirical Null Distribution from Initialized Models}
The cleanest way to instantiate complete independence is to compare two independently initialized models
(with different random seeds and no training). We estimate this empirical null by evaluating $\sim 2500$ such
pairs and computing Kendall--Tau correlations on 10{,}000 random inputs.
Figure~\ref{fig:init_model_correlation_distribution} shows that the empirical null distribution
closely matches a Gaussian distribution.

\begin{figure}[ht]
    \centering
    \includegraphics[width=0.7\linewidth]{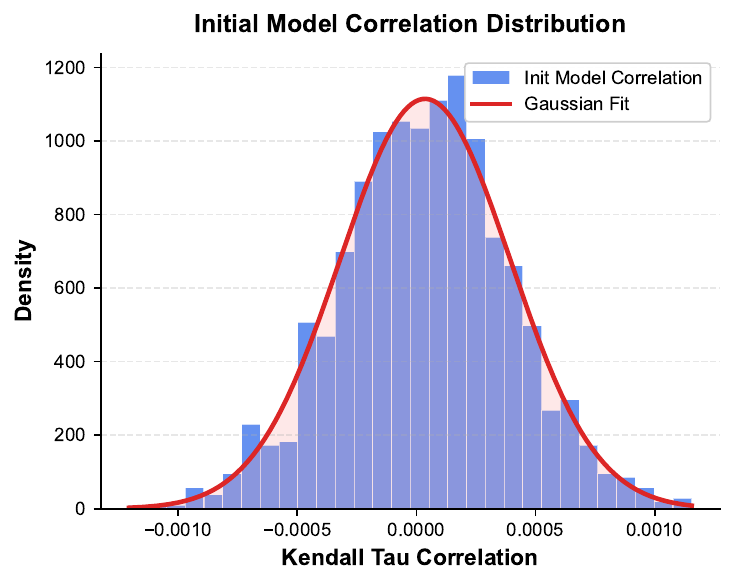}
    \caption{Empirical correlation distribution over $\sim 2500$ pairs of independently initialized models,
    evaluated on random inputs, closely matches a Gaussian distribution centered at zero.}
    \label{fig:init_model_correlation_distribution}
\end{figure}

\paragraph{Constructing the Null Distribution in SeedPrints.}
Approximating the output of neural networks as Gaussian distributions is a widely adopted and effective practice in the auditing literature~\cite{carlini2022membership}. This practice is supported by both empirical evidence and theoretical applications, which show that model output (e.g., logits) often exhibit Gaussian-like behavior across diverse model architectures, data types, and tasks~\cite{lee2019wide}. Therefore, in our work, and based on the previous observation of Gaussian-like Null distirbution (\cref{fig:init_model_correlation_distribution}), we naturally consider Gaussian as a proxy of model output.

Since two random-initialized models have independent parameter matrices, this makes their outputs independent conditional on the same inputs. Therefore,
sampling two independent Gaussian matrices is a natural surrogate for two independently initialized
models under the null. This provides a straightforward way to construct the null:

\begin{enumerate}
    \item Sample two Gaussian matrices of shape $\mathbb{R}^{N \times d_{\text{out}}}$,
    \item Extract most biased dimensions by selecting the top-$m$ ranked average dimensions,
    \item Take the intersection set $\mathcal{S}$ and compute correlations on it.
\end{enumerate}

Intuitively, selecting $\mathcal{S}$ introduces dependencies among dimensions. However, applying the same selection procedure to Gaussian samples preserves this dependency structure under the raw null.

\subsubsection{Baselines Setup}\label{app:more_exp}
We mainly consider four passive fingerprinting baselines (weight- or representation-based).
\textit{{Intrinsic fingerprint}}~\citep{yoon2025intrinsic} (or \emph{PDF} in some papers) compares models via the similarity of the layerwise standard-deviation profiles of attention parameters.
\textit{{REEF}}~\citep{zhang2024reef} computes centered-kernel-alignment (CKA) similarity between feature representations from the same samples across two models.
\textit{{PCS}} and \textit{{ICS}}~\citep{zeng2024huref} (or collectively as \emph{HuRef} in some papers) are weight-similarity methods: PCS flattens all parameters and measures cosine similarity; ICS forms invariant terms from the weights and measures cosine similarity on those invariants. Following~\cite{zhang2024reef}, we use a 0.8 similarity threshold for binary decisions.

\begin{table}[h]
\centering
\setlength{\tabcolsep}{2pt}
\caption{Full results of methods evaluating fingerprints at initialization and after subsequent training.}
\label{app:baseline-cross-seed}
\begin{tabular}{@{}lcccccc@{}}
\toprule
\multirow{2}{*}{\textbf{Model Pair}} 
& \multicolumn{2}{c}{\textbf{Ours}} 
& \multicolumn{4}{c}{\textbf{Baselines}} \\
\cmidrule(lr){2-3} \cmidrule(l){4-7}
 & \textit{$t$-test} & \textit{$u$-test} 
 & \textbf{Intrinsic} & \textbf{REEF} & \textbf{PCS} & \textbf{ICS} \\
\midrule
\cellcolor{green!15}$s_{42}^{init}$ vs. $s_{42}^{base}$     
& 2.20e{-}8$^{\color{green!70!black}\checkmark}$  
& 6.28e{-}8$^{\color{green!70!black}\checkmark}$  
& -0.021$^{\color{red!70!black}\times}$ 
& 0.375$^{\color{red!70!black}\times}$ 
& 0.580$^{\color{red!70!black}\times}$ 
& 0.196$^{\color{red!70!black}\times}$ \\
\cellcolor{green!15}$s_{123}^{init}$ vs. $s_{123}^{base}$   
& 7.09e{-}6$^{\color{green!70!black}\checkmark}$  
& 1.37e{-}5$^{\color{green!70!black}\checkmark}$  
& 0.149$^{\color{red!70!black}\times}$ 
& 0.369$^{\color{red!70!black}\times}$ 
& 0.581$^{\color{red!70!black}\times}$ 
& 0.188$^{\color{red!70!black}\times}$ \\
\cellcolor{green!15}$s_{1000}^{init}$ vs. $s_{1000}^{base}$ 
& 5.58e{-}4$^{\color{green!70!black}\checkmark}$  
& 2.81e{-}3$^{\color{green!70!black}\checkmark}$  
& -0.252$^{\color{red!70!black}\times}$ 
& 0.381$^{\color{red!70!black}\times}$ 
& 0.581$^{\color{red!70!black}\times}$ 
& 0.188$^{\color{red!70!black}\times}$ \\
\cellcolor{green!15}$s_{2000}^{init}$ vs. $s_{2000}^{base}$ 
& 4.00e{-}10$^{\color{green!70!black}\checkmark}$ 
& 1.27e{-}9$^{\color{green!70!black}\checkmark}$ 
& -0.337$^{\color{red!70!black}\times}$ 
& 0.331$^{\color{red!70!black}\times}$ 
& 0.581$^{\color{red!70!black}\times}$ 
& 0.188$^{\color{red!70!black}\times}$ \\
\bottomrule
\end{tabular}
\vspace{-0.3cm}
\end{table}

\subsubsection{Qwen Results}\label{sec:qwen}

\paragraph{Different initialization seeds produce distinct fingerprints}
\cref{tab:qwen_init_model_comparison_alt} presents the $p$-values from correlation tests on Qwen-style models initialized with different random seeds (42, 123, 1000, 2000), evaluated under both the $t$-test and $U$-test. In all cases, the $p$-values remain above $0.01$, confirming that our approach can consistently tell apart models trained from different seeds.

\begin{table}[H]
\vspace{-\baselineskip} 
\centering
\caption{Comparison of fingerprint behaviors between models initialized with different seeds for Qwen-style models.}
\label{tab:qwen_init_model_comparison_alt}
\scalebox{0.95}{
\begin{tabular}{@{}lcc@{}}
\toprule
\multirow{2}{*}{\textbf{Seed Pair}} & \multicolumn{2}{c}{\textbf{Hidden State}} \\
\cmidrule(l){2-3}
 & \textit{$t$-test} & \textit{$U$-test} \\
\midrule
\cellcolor{red!15}$s_{123}$  \,vs.\, $s_{1000}$ & 0.094 & 0.074 \\
\cellcolor{red!15}$s_{1000}$ \,vs.\, $s_{123}$  & 0.125 & 0.094 \\
\cellcolor{red!15}$s_{42}$   \,vs.\, $s_{2000}$ & 0.451 & 0.529 \\
\cellcolor{red!15}$s_{2000}$ \,vs.\, $s_{42}$   & 0.130 & 0.095 \\
\bottomrule
\end{tabular}
}
\vspace{-\baselineskip} 
\end{table}

\paragraph{Training preserves the initialization fingerprint.}
We also compare each initialization model $s^{init}$ with its corresponding trained version $s^{base}$ on the OpenWebText dataset~\citep{Gokaslan2019OpenWeb} ($\approx$10B tokens) for Qwen, as shown in~\cref{tab:qwen_init_base_comparison}. Consistent with the LLaMA-style results, all seed–model pairs yield $p$-values below 0.01. This indicates that training does not erase the initialization fingerprint; instead, the signature is preserved in the descendant model.

\begin{table}[H]
\centering
\caption{Trained models share the same fingerprint behaviors as their initialization models ($p$-value < 0.01).}
\label{tab:qwen_init_base_comparison}
\scalebox{1}{
\setlength{\tabcolsep}{1pt}
\begin{tabular}{@{}lcc@{}}
\toprule
\multirow{2}{*}{\textbf{Model Pair}} & \multicolumn{2}{c}{\textbf{Hidden State}} \\
\cmidrule(l){2-3}
 & \textit{$t$-test} & \textit{$U$-test} \\
\midrule
\cellcolor{green!15}$s_{123}^{init}$  vs. $s_{123}^{base}$   & 7.36e-15 & 3.38e-13 \\
\cellcolor{green!15}$s_{1000}^{init}$ vs. $s_{1000}^{base}$ & 4.41e-13 & 2.01e-11 \\
\cellcolor{green!15}$s_{42}^{init}$   vs. $s_{42}^{base}$   & 1.06e-24 & 2.05e-19 \\
\cellcolor{green!15}$s_{2000}^{init}$ vs. $s_{2000}^{base}$ & 4.87e-24 & 1.92e-20 \\
\bottomrule
\end{tabular}
}
\end{table}

\paragraph{Identical data and order do not make fingerprints converge}
Following the setting in~\cref{tab:cross_init_base_comparison}, we also test whether fingerprint behavior would be erased or confounded by identical data and order for Qwen-style models. \cref{tab:qwen_cross_init_base_comparison} shows that fingerprints remain seed-specific even under identical data and curriculum.

\begin{table}[H]
\centering
\caption{The same dataset and training order do not shape fingerprint behaviors to be identical across different initializations.}
\label{tab:qwen_cross_init_base_comparison}
\scalebox{1.0}{
\setlength{\tabcolsep}{1pt}
\begin{tabular}{@{}lcc@{}}
\toprule
\multirow{2}{*}{\textbf{Model Pair}} & \multicolumn{2}{c}{\textbf{Hidden State}} \\
\cmidrule(l){2-3}
 & \textit{$t$-test} & \textit{$U$-test} \\
\midrule
\cellcolor{red!15}$s_{123}^{init}$  vs. $s_{1000}^{base}$ & 0.286 & 0.254 \\
\cellcolor{red!15}$s_{1000}^{init}$ vs. $s_{123}^{base}$  & 0.036 & 0.040 \\
\cellcolor{red!15}$s_{42}^{init}$   vs. $s_{2000}^{base}$ & 0.026 & 0.043 \\
\cellcolor{red!15}$s_{2000}^{init}$ vs. $s_{42}^{base}$   & 0.112 & 0.123 \\
\bottomrule
\end{tabular}
}
\end{table}

\paragraph{Continual training on diverse datasets does not confound the fingerprint}
\cref{tab:qwen_fingerprint-continual} reports the same test results as~\cref{tab:fingerprint-continual} for Qwen. Our method remains stable and reliable across all cases (even under severe training distribution shifts), whereas many baselines become confused and produce errors.

\begin{table}[H]
\centering
\small
\setlength{\tabcolsep}{1pt}
\renewcommand{\arraystretch}{1.1}
\caption{Fingerprint persistence under continual training on diverse datasets (base model: seed~1000, corpus \texttt{openwebtext}). $U$-test refers to the Mann--Whitney \(U\) test.}
\label{tab:qwen_fingerprint-continual}
\begin{tabular}{lcccccc}
\toprule
\multicolumn{1}{l}{\textbf{Setting}} &
\multicolumn{2}{c}{\textbf{Ours}} &
\multicolumn{4}{c}{\textbf{Baselines}} \\
\cmidrule(lr){1-1}\cmidrule(lr){2-3}\cmidrule(l){4-7}
\textbf{Continual corpus (seed)}
& \textbf{$t$-test} & \textbf{$U$-test}
& \textbf{Intrinsic} & \textbf{REEF} & \textbf{PCS} & \textbf{ICS} \\
\midrule
\cellcolor{green!15}\texttt{TinyStories} (1000) 
& ${8.49e-214}^{\color{green!70!black}\checkmark}$ 
& ${5.09e-71}^{\color{green!70!black}\checkmark}$ 
& ${1.000}^{\color{green!70!black}\checkmark}$ 
& ${0.957}^{\color{green!70!black}\checkmark}$ 
& ${0.999}^{\color{green!70!black}\checkmark}$ 
& ${0.996}^{\color{green!70!black}\checkmark}$ \\
\cellcolor{red!15}\texttt{TinyStories} (123)    
& ${0.256}^{\color{green!70!black}\checkmark}$ 
& ${0.065}^{\color{green!70!black}\checkmark}$ 
& ${0.913}^{\color{red!70!black}\times}$ 
& ${0.199}^{\color{green!70!black}\checkmark}$ 
& ${0.328}^{\color{green!70!black}\checkmark}$ 
& ${0.039}^{\color{green!70!black}\checkmark}$ \\
\cellcolor{green!15}\texttt{the\_stack} (1000)             
& ${1.16e-211}^{\color{green!70!black}\checkmark}$ 
& ${2.30e-76}^{\color{green!70!black}\checkmark}$ 
& ${0.999}^{\color{green!70!black}\checkmark}$ 
& ${0.313}^{\color{red!70!black}\times}$ 
& ${0.995}^{\color{green!70!black}\checkmark}$ 
& ${0.976}^{\color{green!70!black}\checkmark}$ \\
\cellcolor{red!15}\texttt{the\_stack} (123)                
& ${0.610}^{\color{green!70!black}\checkmark}$ 
& ${0.491}^{\color{green!70!black}\checkmark}$ 
& ${0.916}^{\color{red!70!black}\times}$ 
& ${0.255}^{\color{green!70!black}\checkmark}$ 
& ${0.328}^{\color{green!70!black}\checkmark}$ 
& ${0.038}^{\color{green!70!black}\checkmark}$ \\
\bottomrule
\end{tabular}
\vspace{3pt}
\end{table}

\paragraph{All-stage verifiable fingerprints}
Our fingerprinting method on Qwen also demonstrates verifiability at all training stages (\cref{fig:all_stage_verif_qwen}). Across all variants, the suspect model is consistently recognized as belonging to the same lineage, with $p$-values remaining below the 0.01 threshold.

\begin{figure}[H]
    \vspace{-15pt}
    \centering
    \includegraphics[width=0.6\linewidth]{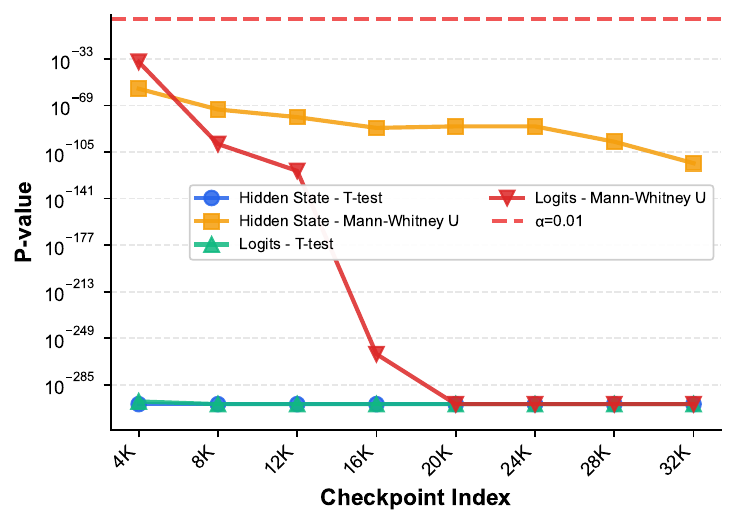}
    \vspace{-3pt}
    \caption{Fingerprint verifies lineage at every checkpoint ($p$-values $< 0.01$) for Qwen structure model.}
    \label{fig:all_stage_verif_qwen}
    \vspace{-10pt}
\end{figure}

\subsection{Experimental Setup for Pre-training}
\label{app:exp_setup}

To facilitate reproducibility, we provide the detailed configurations used for our pre-training experiments.

\subsubsection{Model Architecture}
Our baseline models adopt the standard {Llama 2} architecture~\citep{touvron2023llama}, featuring RMSNorm for pre-normalization, SwiGLU activation functions, and Rotary Positional Embeddings (RoPE). The specific architectural hyperparameters for the models used in our main comparisons are listed in Table~\ref{tab:model_config}.

\begin{table}[h]
    \centering
    \caption{Model Architecture Configurations.}
    \label{tab:model_config}
    \begin{tabular}{lc}
        \toprule
        \textbf{Hyperparameter} & \textbf{Value} \\
        \midrule
        Hidden Size ($d_{\text{model}}$) & 768 \\
        Intermediate Size ($d_{\text{ffn}}$) & 2048 \\
        Number of Layers ($L$) & 12 \\
        Number of Heads ($H$) & 12 \\
        Head Dimension ($d_k$) & 64 \\
        Vocabulary Size & 32,000 \\
        Normalization & RMSNorm \\
        Activation Function & SwiGLU \\
        Position Embedding & RoPE \\
        Context Window & 2048 \\
        \bottomrule
    \end{tabular}
\end{table}

\subsubsection{Initialization and Optimization}
Models are initialized following the standard GPT-2 initialization scheme~\citep{radford2019language}. Please refer to \cref{sec:appendix_init_details} for detailed specifications regarding the standard deviation and residual weight scaling.

We employ the {AdamW} optimizer with $\beta_1=0.9, \beta_2=0.95$. Training follows a cosine learning rate schedule with a linear warmup phase. Detailed optimization hyperparameters are provided in Table~\ref{tab:training_params}.

\subsubsection{Infrastructure}
All experiments were conducted on a single computing node equipped with $2 \times$ NVIDIA A100 (80GB) GPUs, utilizing the Distributed Data Parallel (DDP) strategy. The total training duration was approximately 48 hours.

\subsubsection{Dataset}
The models are pre-trained on the {OpenWebText} dataset~\citep{Gokaslan2019OpenWeb}, an open-source reproduction of the WebText corpus. Data tokenization is performed using the standard Llama 2 tokenizer. Given the target training volume, we train for multiple epochs over the dataset (repeated sampling) to reach the total iteration count.

\begin{table}[h]
    \centering
    \caption{Pre-training Hyperparameters.}
    \label{tab:training_params}
    \begin{tabular}{lc}
        \toprule
        \textbf{Hyperparameter} & \textbf{Value} \\
        \midrule
        Peak Learning Rate & $6.0 \times 10^{-4}$ \\
        Min Learning Rate & $6.0 \times 10^{-5}$ \\
        Warmup Iterations & 2,000 \\
        Max Iterations & 200,000 \\
        Global Batch Size & 128 \\
        Micro Batch Size & 32 \\
        Gradient Accumulation Steps & 2 \\
        Weight Decay & 0.1 \\
        Gradient Clipping & 1.0 \\
        Precision & \texttt{bfloat16} \\
        Optimizer & AdamW \\
        \bottomrule
    \end{tabular}
\end{table}

\end{document}

%% file: tables/init_base_compare.tex

\begin{table}[t]
\vspace{-1em}
\centering
\resizebox{\linewidth}{!}{%
\begin{minipage}{\linewidth}
\centering

\begin{subtable}[t]{0.31\linewidth}
\caption{Comparison of fingerprint behaviors between models initialized with different seeds.}
\vspace{9pt}
\label{tab:init_model_comparison_alt}
\setlength{\tabcolsep}{2pt}
\begin{tabular}{@{}lcc@{}}
\toprule
\textbf{Seed Pair} & \textit{$t$-test} & \textit{$U$-test} \\
\midrule
\cellcolor{red!15}$s_{42}$ vs. $s_{2000}$    & 0.357 & 0.532 \\
\cellcolor{red!15}$s_{123}$ vs. $s_{42}$     & 0.678 & 0.565 \\
\cellcolor{red!15}$s_{1000}$ vs. $s_{123}$   & 0.363 & 0.335 \\
\cellcolor{red!15}$s_{2000}$ vs. $s_{1000}$  & 0.434 & 0.481 \\
\bottomrule
\end{tabular}
\end{subtable}
\hfill
\begin{subtable}[t]{0.33\linewidth}
\caption{Trained models share the same fingerprint behaviors as their initialization ($p$-value < 0.01).}
\label{tab:init_base_comparison}
\setlength{\tabcolsep}{2pt}
\begin{tabular}{@{}lcc@{}}
\toprule
\textbf{Model Pair} & \textit{$t$-test} & \textit{$U$-test} \\
\midrule
\cellcolor{green!15}$s_{42}^{init}$ vs. $s_{42}^{base}$     & 2.20e-8  & 6.28e-8 \\
\cellcolor{green!15}$s_{123}^{init}$ vs. $s_{123}^{base}$   & 7.09e-6  & 1.37e-5 \\
\cellcolor{green!15}$s_{1000}^{init}$ vs. $s_{1000}^{base}$ & 5.58e-4  & 2.81e-3 \\
\cellcolor{green!15}$s_{2000}^{init}$ vs. $s_{2000}^{base}$ & 4.00e-10 & 1.27e-9 \\
\bottomrule
\end{tabular}
\end{subtable}
\hfill
\begin{subtable}[t]{0.31\linewidth}
\caption{The same dataset and training order do not shape fingerprint behaviors to be identical across different initializations.}
\label{tab:cross_init_base_comparison}
\setlength{\tabcolsep}{2pt}
\begin{tabular}{@{}lcc@{}}
\toprule
\textbf{Model Pair} & \textit{$t$-test} & \textit{$U$-test} \\
\midrule
\cellcolor{red!15}$s_{123}^{init}$ vs. $s_{1000}^{base}$  & 0.385 & 0.486 \\
\cellcolor{red!15}$s_{1000}^{init}$ vs. $s_{2000}^{base}$ & 0.035 & 0.096 \\
\cellcolor{red!15}$s_{42}^{init}$ vs. $s_{123}^{base}$    & 0.426 & 0.337 \\
\cellcolor{red!15}$s_{2000}^{init}$ vs. $s_{42}^{base}$   & 0.388 & 0.287 \\
\bottomrule
\end{tabular}
\end{subtable}

\end{minipage}%
}
\end{table}

%% file: tables/baseline.tex
\begin{table}[t]
\centering
\small
\setlength{\tabcolsep}{5pt}
\caption{Fingerprint persistence under continual training on diverse datasets (base model: seed~1000, corpus \texttt{openwebtext}). $U$-test refers to the Mann--Whitney \(U\) test.}
\label{tab:fingerprint-continual}

\begin{tabular}{lcccccc}
\toprule
\multicolumn{1}{l}{\textbf{Setting}} &
\multicolumn{2}{c}{\textbf{Ours}} &
\multicolumn{4}{c}{\textbf{Baselines}} \\
\cmidrule(lr){2-3}\cmidrule(l){4-7}

\textbf{Continual corpus (seed)}
& \textbf{$t$-test} & \textbf{$U$-test}
& \textbf{Intrinsic} & \textbf{REEF} & \textbf{PCS} & \textbf{ICS} \\
\midrule

\cellcolor{green!15}\texttt{TinyStoriesV2\_cleaned} (1000)
& ${0}^{\color{green!70!black}\checkmark}$
& ${\text{7.77e-89}}^{\color{green!70!black}\checkmark}$
& ${1.000}^{\color{green!70!black}\checkmark}$
& ${0.759}^{\color{red!70!black}\times}$
& ${0.999}^{\color{green!70!black}\checkmark}$
& ${0.996}^{\color{green!70!black}\checkmark}$ \\

\cellcolor{red!15}\texttt{TinyStoriesV2\_cleaned} (123)
& ${0.943}^{\color{green!70!black}\checkmark}$
& ${0.902}^{\color{green!70!black}\checkmark}$
& ${0.950}^{\color{red!70!black}\times}$
& ${0.658}^{\color{green!70!black}\checkmark}$
& ${0.332}^{\color{green!70!black}\checkmark}$
& ${0.012}^{\color{green!70!black}\checkmark}$ \\

\cellcolor{green!15}\texttt{the\_stack} (1000)
& ${0}^{\color{green!70!black}\checkmark}$
& ${\text{3.09e-69}}^{\color{green!70!black}\checkmark}$
& ${0.489}^{\color{red!70!black}\times}$
& ${0.557}^{\color{red!70!black}\times}$
& ${0.585}^{\color{red!70!black}\times}$
& ${0.123}^{\color{red!70!black}\times}$ \\

\cellcolor{red!15}\texttt{the\_stack} (123)
& ${0.732}^{\color{green!70!black}\checkmark}$
& ${0.831}^{\color{green!70!black}\checkmark}$
& ${0.445}^{\color{green!70!black}\checkmark}$
& ${0.580}^{\color{green!70!black}\checkmark}$
& ${0.301}^{\color{green!70!black}\checkmark}$
& ${0.026}^{\color{green!70!black}\checkmark}$ \\

\bottomrule
\end{tabular}
\vspace{-5pt}
\end{table}

%% file: tables/real_model.tex


\begin{table}[t]
\centering
\small
\setlength{\tabcolsep}{1pt}
\tabcolsep=1mm
\renewcommand{\arraystretch}{1.15}
\caption{Fingerprinting results \emph{vs.} LLaMA-2-7B. 
Each row compares a target model against LLaMA-2-7B. 
\textbf{\emph{U}-test $p$} reports the $p$-value from our hidden-state correlation test ($<0.01$ indicates a strong signal). 
\textbf{Intrinsic}, \textbf{REEF}, \textbf{PCS}, and \textbf{ICS} report similarity scores (higher = better).}
\label{tab:fingerprint-models}
\scalebox{0.88}{
\begin{tabular}{l r c c c c c}
\toprule
\textbf{Model} & \textbf{\# Tokens} & \textbf{\emph{U}-test $p$ $(< 0.01)$} & \textbf{Intrinsic $\uparrow$} & \textbf{REEF $(\uparrow)$} & \textbf{PCS $(\uparrow)$} & \textbf{ICS $(\uparrow)$} \\
\midrule
Llama-2-finance-7B~\citep{cxllin2023llama2fin}                 & 5M    & \num{1.34e-41}$^{\color{green!70!black}\checkmark}$   & $1.0000^{\color{green!70!black}\checkmark}$ & $0.9950^{\color{green!70!black}\checkmark}$ & $0.9979^{\color{green!70!black}\checkmark}$ & $0.9952^{\color{green!70!black}\checkmark}$ \\
Vicuna-1.5-7B~\citep{vicuna2023}                      & 370M  & \num{1.49e-96}$^{\color{green!70!black}\checkmark}$   & $1.0000^{\color{green!70!black}\checkmark}$ & $0.9985^{\color{green!70!black}\checkmark}$ & $0.9985^{\color{green!70!black}\checkmark}$ & $0.9949^{\color{green!70!black}\checkmark}$ \\
Wizardmath-7B~\citep{luo2023wizardmath}                      & 1.8B  & \num{4.09e-100}$^{\color{green!70!black}\checkmark}$  & $1.0000^{\color{green!70!black}\checkmark}$ & $0.9979^{\color{green!70!black}\checkmark}$ & $1.0000^{\color{green!70!black}\checkmark}$ & $0.9994^{\color{green!70!black}\checkmark}$ \\
Meditron-7B~\citep{chen2023meditron70b}                        & 48B   & \num{5.212e-4}$^{\color{green!70!black}\checkmark}$   & $0.9990^{\color{green!70!black}\checkmark}$ & $0.9978^{\color{green!70!black}\checkmark}$ & $1.0000^{\color{green!70!black}\checkmark}$ & $0.9817^{\color{green!70!black}\checkmark}$ \\
CodeLlama-7B~\citep{codellama}                       & 500B  & \num{2.008e-3}$^{\color{green!70!black}\checkmark}$   & $0.9480^{\color{green!70!black}\checkmark}$ & $0.9947^{\color{green!70!black}\checkmark}$ & $0.6863^{\color{red!70!black}\times}$ & $0.3369^{\color{red!70!black}\times}$ \\
\bottomrule
\end{tabular}
}
\vspace{-5pt}
\end{table}